\newcommand{\ba}{\leftrightarrow}
\newenvironment{ContinueExample}{{\noindent {\textbf{Example 1}} ({\it continued.})}\begin{itshape}}{\end{itshape}}
\long\def\comment#1{}
\title{Knowledge Sharing in Coalitions\thanks{This version corrected an error in its previous version published at AI'15. We specially thank Prof.Wojtek Jamroga for pointing out the error.}}
\author{Guifei Jiang\inst{1}\inst{2} \and Dongmo Zhang\inst{1} \and Laurent Perrussel\inst{2}}
\institute{AIRG, Western Sydney University, Penrith, Australia
\and IRIT, University of Toulouse 1, Toulouse, France}
\begin{document}
\maketitle

\begin{abstract}

  The aim of this paper is to investigate the interplay between  knowledge
  shared by a group of agents and its coalition ability. We characterize this
  relation in the standard context of  imperfect information concurrent
  game. We assume that whenever a set of agents
  form a coalition to achieve a goal, they share their knowledge before
  acting. Based on this assumption, we propose new semantics for
  alternating-time temporal logic with imperfect information and perfect
  recall. It turns out that this semantics is sufficient to preserve all
  the desirable properties of coalition ability in traditional coalition
  logics. Meanwhile, we investigate how knowledge sharing within a group of agents
contributes to its coalitional ability through the interplay of epistemic and coalition modalities.
  This work provides a partial answer to the
  question: which kind of group knowledge is required for a group to
  achieve their goals in the context of imperfect information.
\end{abstract}

  \section{Introduction}
Reasoning about coalitional abilities and strategic interactions is fundamental in analysis of multiagent systems (MAS). Among many others~\cite{chatterjee2010strategy,herzig2010dynamic,herzig2006knowing,thielscher2010general,zhang2015logic}, Coalition Logic (CL)~\cite{pauly2002modal} and Alternating-time Temporal Logic (ATL)~\cite{alur2002alternating} are typical logical frameworks that allow to specify and reason about effects of coalitions~\cite{van2006modal}. \comment{In certain sense, CL
can be viewed as a simplification of ATL.}
In a nutshell, these logics express coalition ability using a modality in the form, say $\left \langle\langle G \right \rangle\rangle\varphi$, to mean coalition $G$ (a set of agents) can achieve a property $\varphi$, regardless what the other agents do. ATL/CL assume that each agent in a multi-agent system has complete information about the system at all states (perfect information). Obviously this is not always true in the real world. Different agents might own different knowledge about their system. To model the systems in which agents have imperfect information, a few attempts have been made in the last few years by extending ATL with epistemic operators~\cite{vanDitmarsch14,van2003cooperation,jamroga2004agents,van2005epistemic,guelev2011alternating,huang2016strengthening}. With the extensions, agents' abilities are associated with their knowledge. For instance, assuming a few agents are trying to open a safe, only the ones who know the code have the ability to open the safe. \comment{The strategies used by ATL are thus often unrealistic for systems where global information is unaccessible. To reason about strategies under uncertainty, ATL is combined with epistemic logic and several semantic variants have been proposed based on different interpretations of agents' ability. For instance, an agent may take different strategies or the same strategies at all distinguishable states (uniform strategies); agents may be able to remember the entire history of the game (perfect recall) or just the current state (memoryless, also referred to as imperfect recall)~\cite{jamroga2004agents,schobbens2004alternating}. Also an agents may or may not choose a strategy once and forever (irrevocable strategies)~\cite{aagotnes2007alternating,alur2002alternating}. In this paper, we focus on ATL with imperfect information and perfect recall.}

One difficulty of ATL with imperfect information is how to model knowledge sharing among a coalition. \comment{individual knowledge affects group ability.} In other words, if a group of agents form a coalition, whether their knowledge will be shared and be contributed to the group abilities~\cite{herzig2014logics}?
 \comment{a group $G$ knows how to achieve a
goal $\varphi$, i.e., that $G$ knows the group has a joint strategy to achieve $\varphi$, then it is not clear which
kind of group knowledge is required.}  To the best of our knowledge, most of the existing epistemic ATL-style logics do not assume that members of a coalition share knowledge unless the information is general knowledge~\cite{bulling2014agents,schobbens2004alternating} or common knowledge~\cite{vanDitmarsch14}\comment{ and distributed knowledge~\cite{vanDitmarsch14}} to a group or a system. \comment{The first two types consider a coalition as a set of agents who simply act together to achieve some goal.} However, most of the time when a set of agents form a coalition, their cooperation is not merely limited to acting together, but, more importantly, sharing their knowledge when acting. Safe opening is an example.
\comment{ Polling each member's own knowledge together actually corresponds to assessing distributed knowledge of the coalition.}

This chapter aims to take the challenge of dealing with knowledge sharing among coalitions. By a coalition we mean a set of agents that can not only act together to achieve a goal, but also share their knowledge when acting. We say that a coalition can ensure $\varphi$ if the agents in the coalition distributedly know that they can enforce $\varphi$. Based on this idea, we provide a new semantics for the coalition operator in ATL with imperfect information. It turns out that this semantics is sufficient to preserve
 \comment{
In quest of the ``right'' semantics for perfect recall strategies under uncertainty, Bulling and Jamroga recently proposed a variant of the perfect recall semantics, called truly perfect recall (also referred as no-forgetting semantics)~\cite{bulling2014agents,jamroga2014atl}. This semantics was designed to
overcomes the forgetting phenomenon in the standard perfect recall semantics of ATL. Yet it
violates
the coalition monotonicity property in traditional coalition logics~\cite{alur2002alternating,goranko2006complete,pauly2002modal}, that is if a coalition can achieve some goal, then its superset can achieve this goal as well. }
desirable properties of coalition ability~\cite{goranko2006complete,pauly2002modal}. More importantly, we investigate how knowledge sharing within a group of agents
contributes to its coalitional ability through the interplay of distributed knowledge and coalition ability. Our contribution is twofold: firstly, this work can be seen as an attempt towards the difficulty: which kind of group knowledge is required for a group to achieve some goal in the context of imperfect information; secondly, these results show that the fixed-point characterizations of coalition operators which normally fail in the context of imperfect information~\cite{Belar14,Belar15} can be partially recovered by the interplay of epistemic and coalitional operators.

The rest of this chapter is structured as follows. Section 2 introduces a motivating example for our new semantics. Section 3 provides the new semantics and investigates its properties. Section 4 explores the interplay of epistemic and coalitional operators. Section 5 discusses related work. Finally we conclude the paper with future work.

\section{A Motivating Example}
Let's consider the following example which highlights our motivation to study coalition abilities under the assumption of knowledge sharing within coalitions.
\begin{example}
 Figure \ref{f} depicts a variant of the shell game \cite{bulling2014agents} with three players: the shuffler s, the guessers $g_1$ and $g_2$. Initially the shuffler places a ball in one of the two shells (the left (L) or the right (R)). The guesser $g_1$ can observe which action the shuffler does, while the other guesser $g_2$ can't. A guesser or a coalition of two wins if she picks up the shell containing the ball. We assume that the guesser $g_1$ takes no action ($n$) and the guesser $g_2$ chooses the shell (the left ($l$) or the right ($r$)).
\end{example}
\begin{figure}
\begin{minipage}[t]{0.5\linewidth}
\centering
 \includegraphics[width=1.5in]{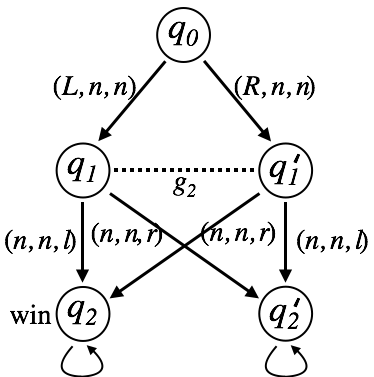} 
 \caption{{\footnotesize the model $M_1$}}
 \label{f} 
 \end{minipage}
\begin{minipage}[t]{0.5\linewidth}
\centering
 \includegraphics[width=1.5in]{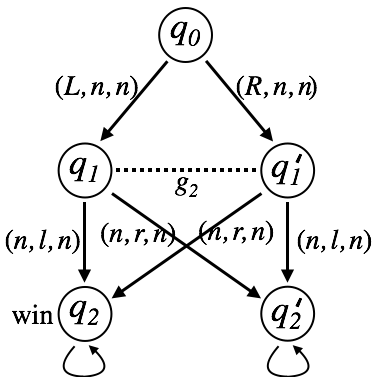} 
 \caption{{\footnotesize the model $M_2$}}
 \label{f0} 
\end{minipage}
{\footnotesize The tuple $(\alpha_1, \alpha_2, \alpha_3)$ represents an action profile,.i.e, action $\alpha_1$ of player s, action $\alpha_2$ of player $g_1$, and action $\alpha_3$ of player $g_2$. The dotted line represents $g_2$'s indistinguishability relation: reflexive loops are omitted. State $q_2$ is labelled with the proposition win.}
\vspace{-5mm}
 \end{figure}
Clearly, $g_1$ knows the location of the ball but cannot choose. Instead $g_2$ does not know where the ball is, though he has right to choose the shell. It's easy to see that neither $g_1$ nor $g_2$ can win this game individually. But if $g_1$ and $g_2$ form a coalition, it should follow that by sharing their knowledge they can cooperate to win. However, according to the existing semantics for ATL with imperfect information including the latest one, called truly perfect recall (also referred as no-forgetting semantics)~\cite{bulling2014agents}, the coalition of $g_1$ and $g_2$ does not have such ability to win since they claim that coalition abilities require general knowledge or even common knowledge.

Moreover, these semantic variants fail to preserve the coalition monotonicity which is a desirable property for coalition ability in coalitional logics~\cite{goranko2006complete,pauly2002modal}, that is, if a coalition can achieve some goal, then its superset can achieve this goal as well. For instance, the model $M_2$ in Figure~\ref{f0} depicts a variant game of Example 1 by just switching the available actions of two guessers. The guesser $g_1$ chooses the shell and the guesser $g_2$ takes no action. Then
it's clear that the guesser $g_1$ can win no matter what the others do, as he sees the location of the ball and can pick up the right shell. It should follow that as a group, the guessers $g_1$ and $g_2$ can win this game. However, according to most existing semantics, though the guesser $g_1$ has the ability to win, this ability no longer holds once he forms a coalition with guesser $g_2$. These counterintuitive phenomena motivate our new semantics for ATL with imperfect information and perfect recall.

\section{The Framework}

In this section, we provide a new semantics for ATL with imperfect information and perfect recall based on the assumption of knowledge sharing in coalitions, and then investigate logical properties of ATL under this semantics.
\subsection{Syntax of ATL}
 Let $\Phi$ be a countable set of atomic propositions and $N$ be a finite nonempty set of agents. The language of ATL, denoted by $\mathcal{L}$, is defined by the following grammar:
$$\varphi := p \ |\  \neg \varphi\ |\ \varphi \wedge \varphi\ |\ \left \langle\langle G \right \rangle\rangle\!\bigcirc\!\varphi\ |\ \left \langle\langle G \right \rangle\rangle\Box\varphi\ |\ \left \langle\langle G \right \rangle\rangle\varphi\mathcal{U}\varphi$$
where $p\in\Phi$ and $\emptyset\neq G\subseteq{N}$.

 A coalition operator $\left \langle\langle G \right \rangle\rangle\varphi$ intuitively expresses that
 the group $G$ can cooperate to ensure that $\varphi$. The temporal operator $\Box$ means ``from now on (always)'' and other temporal connectives in ATL are $\mathcal{U}$ (``until'') and $\bigcirc$ (``in the next state''). The dual operator $\Diamond$ of $\Box$ (``either now or at some point in the future'') is defined as $\Diamond\varphi =_{def} \top\mathcal{U}\varphi$. Moreover, the standard epistemic operators can be defined as follows: $K_i\varphi = _{def} \left \langle\langle i\right \rangle\rangle \varphi\mathcal{U}\varphi$ and $D_G\varphi = _{def} \left \langle\langle G\right \rangle\rangle \varphi\mathcal{U}\varphi$. As we will show in the semantics, these abbreviations capture their standard intuitions, i.e., ``$K_i\varphi$'' says the agent $i$ knows $\varphi$, and $D_G\varphi$ means it is distributed knowledge among the group $G$ that $\varphi$. The dual operators of $K$ and  $D$  are defined as follows: $\widehat{K}_i\varphi =_{def} \neg K_i\neg\varphi$, $\widehat{D}_G\varphi =_{def}\neg{D}_G\neg\varphi$.

\subsection{Semantics of ATL}
 The semantics is built upon the \emph{imperfect information concurrent game structure (iCGS)}~\cite{van2003cooperation,schobbens2004alternating}.
 \begin{definition}
 An iCGS is a tuple $M = (N, \Phi, W, \mathcal{A}, \pi, d, \delta, \{R_i\}_{i\in N})$ where
  \begin{itemize}
    \item $N = \{1,2,\cdots,k\}$ is a nonempty finite set of \emph{players};
    \item $\Phi$ is a set of \emph{atomic propositions};
    \item $W$ is a nonempty finite set of \emph{states};
    \item $\pi: \Phi\mapsto\wp(W)$ is  a \emph{valuation function};
    \item $\mathcal{A}$ is a nonempty finite set of \emph{actions};
    \item $d: N\times W\mapsto\wp(\mathcal{A})$ is a mapping specifying nonempty sets of actions available to agents at each state. We will write $d_{i}(w)$ rather $d(i,w)$.  The set of joint actions at $w$ for $N$ is denoted as $D(w) = d_{1}(w)\times\cdots\times d_{k}(w)$;
    \item $\delta: W\times D(W)\mapsto W$ is the \emph{transition function} from every pair $(w\in W, \alpha\in D(w))$ to an outcome state $\delta(w, \alpha)\in W$.
    \item $R_i\subseteq W\times W$ is an equivalence relation for agent $i$ indicating the states that are indistinguishable  from her viewpoint. For consistency, we assume that each agent knows which actions are available for her, i.e., $d_i(w) = d_i(w')$ whenever $wR_i w^{\prime}$.
  \end{itemize}
\end{definition}
A {\it path} $\lambda$ is an infinite sequence of states and actions 
$w_0 \stackrel{\alpha_1}{\rightarrow}w_1\stackrel{\alpha_2}{\rightarrow} w_2\cdots$,
where for each $j\geq 1$, $\alpha_j\in{D(w_{j-1})}$ and $\delta(w_{j-1},\alpha_j) = w_{j}$. Any finite segment $w_k \stackrel{\alpha_{k+1}}{\rightarrow}w_{k+1}\stackrel{\alpha_{k+2}}{\rightarrow}\cdots\stackrel{\alpha_{l}}{\rightarrow} w_l$
of a path is called a {\it history}. The set of all histories for $M$ is denoted by $H$. We use $\lambda[j]$ to denote the $j$-th state on path $\lambda$, $\lambda[j,k]$ ($0\leq j\leq k$) to denote the segment of $\lambda$ from the $j$-th state to the $k$-th state, and $\lambda[j, \infty]$ to denote the subpath of $\lambda$ starting from $j$. The length of history $h$, denoted by $|h|$, is defined as the number of actions. 

The following definition specifies what a player with perfect reasoning capabilities can in principle know at a special stage of an imperfect information game.

\begin{definition}
Two histories $h = w_{0}\stackrel{\alpha_1}{\rightarrow}w_{1}\stackrel{\alpha_2}{\rightarrow}\cdots \stackrel{\alpha_m}{\rightarrow}w_{m}$ and  $h^\prime = w^\prime_{0}\stackrel{\alpha_1^\prime}{\rightarrow}w^\prime_{1}\stackrel{\alpha_2^\prime}{\rightarrow}\cdots \stackrel{\alpha_n^\prime}{\rightarrow}w^\prime_{n}$ are equivalent for agent $i\in N$, denoted by  $h\approx_i h^\prime$, iff
\begin{enumerate}
  \item $m = n$, 
  \item $w_jR_iw^\prime_j$ for any $0\leq j\leq m$, and 
  \item $\alpha_k(i) = \alpha^\prime_k(i)$ for any $1\leq k\leq m$.
\end{enumerate}
where $\alpha_k(i)$ is the $i$-th component of $\alpha_k$.
\end{definition}
 Intuitively, two histories are indistinguishable for an agent if (1) they have the same length, (2) their corresponding states are indistinguishable for this agent, and (3) the agent takes the same action at the each corresponding stage. Our notion of perfect recall is more like GDL perfect recall \cite{vanDitmarsch14,thielscher2010general} as well as perfect recall in extensive games~\cite{kuhn1953extensive} by requiring that an agent remember the past states as well as her own actions.
This is stronger than the one in most epistemic ATL-style logics which often use the state-based equivalence without taking the actions into consideration, that is, a (truly) perfect recall agent just remembers the past states. Our version has the advantage to deal with situations where different actions may have the same effects. For instance, consider two histories $q_0 \stackrel{a}{\rightarrow}q_1$ and $q_0 \stackrel{b}{\rightarrow}q_1$ with a single agent. According to the state{-}based equivalence, the agent cannot distinguish the two histories, but actually they are different from her view since she takes different actions at state $q_0$\footnote{It is worth to mention that \cite{ruan2012strategic} proposed a way to embed actions to a state so that the state-based equivalence can achieve the same meaning.}.
Note that the perfect recall agent does not observe or remember other agents' actions.

 In particular, we say two paths $\lambda$ and  $\lambda^\prime$ are equivalent up to stage $j\geq 0$ for agent $i\in N$, denoted by $\lambda\approx_i^j\lambda^\prime$, iff $\lambda[0,j]\approx_i\lambda^\prime[0,j]$. As mentioned before, we assume that whenever a set of agents form a coalition to achieve their goals, the agents share their own knowledge before acting. To make this idea precise, we extend the indistinguishability relation $\approx_i^j$ to groups as the intersection of all its members' individual equivalence relation, i.e., $\approx_G ^j=$ $\bigcap_{i\in G}\approx^j_i$. Let $\approx_G^j(\lambda)$ denote the set of all paths that are indistinguishable from $\lambda$ up to stage $j$ for coalition $G$, i.e., $\approx_G^j(\lambda) = \{\lambda'~|~ \lambda\approx_G^j\lambda'\}$.

A {\it strategy} is a plan telling one agent what to do at each stage of a given game. 
With knowledge sharing among members of a coalition $G\subseteq N$, we say a strategy of agent $i\in G$ is {\it uniform } if the strategy specifies the same action for $i$ at all histories which are indistinguishable for $G$. Formally,
\begin{definition}
Given $i\in G\subseteq N$, a uniform perfect recall strategy for agent $i$ w.r.t $G$ is a function $f_i: H\rightarrow\mathcal{A}$ such that for any history $h, h'\in H$,
\begin{enumerate}
\item  $f_i(h)\in d_i(last(h))$, and 
\item if $h\approx_Gh'$, then $f_i(h)= f_i(h')$,
\end{enumerate}
where $last(h)$ denotes the last state of $h$.
\end{definition}
Intuitively, a uniform perfect recall strategy for an agent in a group tells one of her legal actions to take at each history and specifies the same action for indistinguishable histories of the group. In particular, the standard notion of uniform strategies with respect to individual knowledge can be viewed as a special case.
In the rest of paper, we simply call a uniform perfect recall strategy a strategy.

A {\it joint strategy} for group $\emptyset\neq G\subseteq N$, denoted by $F_G$, is a vector of its members' individual strategies, i.e., $\langle f_i\rangle_{i\in G}$.
Function $\mathcal{P}(h, f_i)$ returns the set of all paths that can occur when agent $i$'s strategy $f_i$ executes after an initial history $h$. Formally, $\lambda\in \mathcal{P}(h, f_i)$ iff $\lambda[0, |h|] = h$ and for any $j\geq |h|$, $ f_i(\lambda[0,j]) = \theta_i(\lambda, j)$ where $\theta_i(\lambda, j)$ is the action of agent $i$ taken at stage $j$ on path $\lambda$. Obviously,  the set of all paths complying with joint strategy $F_G$ after $h$ is defined as $\mathcal{P}(h, F_G) = \bigcap_{i\in G}\mathcal{P}(h, f_i)$.  It should be noted that if a group is characterized by full coordination both on the level of strategies and knowledge, we may view the group as a single agent whose abilities and knowledge are the sum of those of all the members~\cite{kazmierczak2014multi}.

We are now in the position to introduce the new semantics for ATL. Formulae are interpreted over triples consisting of a model, a path and an index which indicates the current stage on the path.
\begin{definition}
Let $M$ be an iCGS. Given a path $\lambda$ of $M$ and a stage $j\in \mathbb{N}$ on $\lambda$,
the \emph{satisfiability} of a formula~$\varphi$ wrt.~$M$, $\lambda$ and $j$, denoted by $M,\lambda, j\models\varphi$, is defined as follows\emph{:}
\vspace{-1mm}
\begin{tabbing}
---\=------------------------------------\=--------\=\kill
\>$M,\lambda, j\models p$                       \>iff\> $p\in \pi(\lambda[j])$\\
\>$M,\lambda, j\models \neg \varphi$               \>iff\> $M,\lambda, j\not\models \varphi$\\
\>$M,\lambda, j\models  \varphi_1 \wedge \varphi_2$      \>iff\> $M,\lambda, j\models \varphi_1$ and $M,\lambda, j\models  \varphi_2$\\
\>$M,\lambda, j\models \left\langle\langle G \right \rangle\rangle\!\bigcirc\!\varphi$                       \>iff\> $\exists F_G$ $\forall\lambda^\prime\in \approx_G^j(\lambda)$ $\forall\lambda''\in \mathcal{P}(\lambda'[0,j], F_G)$\\
\> \> \>$M, \lambda'', j+1\models\varphi$\\
\>$M,\lambda, j\models \left\langle\langle G \right \rangle\rangle\!\Box\!\varphi$                      \>iff\> $\exists F_G$ $\forall\lambda^\prime\in \approx_G^j(\lambda)$ $\forall\lambda''\in \mathcal{P}(\lambda'[0,j], F_G)$ \\
\> \> \>$\forall k\geq j$ $M, \lambda'', k\models\varphi$\\
\>$M,\lambda, j\models \left \langle\langle G \right \rangle\rangle\varphi_{1}\mathcal{U}\varphi_{2}$                      \>iff\> $\exists F_G$ $\forall\lambda^\prime\in\approx_G^j(\lambda)$$\forall\lambda''\in \mathcal{P}(\lambda'[0,j], F_G)$\\
\> \> \> $\exists k\geq j$, $M, \lambda'', k\models\varphi_2$, and\\
\> \> \> $\forall j\leq t < k$, $M, \lambda'', t\models\varphi_1$\\
\end{tabbing}
\end{definition}
\vspace{-3mm}
The interpretation for the coalition operator $\left \langle\langle G \right \rangle\rangle\varphi$ captures its precise meaning that the coalition $G$ by sharing knowledge can cooperate to enforce that $\varphi$. Alternatively, the agents in $G$ distributedly know that they can enforce that $\varphi$. A formula $\varphi$ is \emph{valid} in an iCGS $M$, written as $M \models\varphi$, if $M, \lambda, j\models\varphi$ for all paths $\lambda\in M$ and every stage $j$ on $\lambda$. A formula $\varphi$ is {\it valid}, denoted by $\models\varphi$, if it is valid in every iCGS $M$.

 We first show that, as we claimed before, the abbreviations capture the intended meanings of the epistemic operators.
\begin{proposition}
Given an iCGS $M$, a path $\lambda$ of $M$ and a stage $j\in \mathbb{N}$ on $\lambda$,
\begin{itemize}
  \item $M,\lambda, j\models K_i\varphi$ ~ iff ~ for all $\lambda'\approx_i^j\lambda$, $M, \lambda', j\models\varphi$.
  \item $M,\lambda, j\models D_G\varphi$ ~ iff ~ for all $\lambda'\in \approx_G^j(\lambda)$, $M, \lambda', j\models\varphi$.
\end{itemize}
\end{proposition}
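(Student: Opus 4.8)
The plan is to reduce each abbreviation to the stated epistemic reading by unfolding the $\mathcal{U}$-clause of the semantics and then exploiting two structural facts about the logic. Since $\approx_{\{i\}}^j=\approx_i^j$, the first bullet is just the instance $G=\{i\}$ of the second, so I would prove the $D_G$ claim and read off the $K_i$ claim at the end.

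First I would note that the until collapses when its two arguments coincide. Unfolding $D_G\varphi=\langle\langle G\rangle\rangle\varphi\,\mathcal{U}\,\varphi$, the inner condition on a fixed $\lambda''$ reads ``$\exists k\geq j$ with $M,\lambda'',k\models\varphi$ and $M,\lambda'',t\models\varphi$ for all $j\leq t<k$''. This is equivalent to $M,\lambda'',j\models\varphi$: the converse takes $k=j$, making the bounded universal quantifier vacuous, and the forward direction reads off $k=j$ when $k=j$ and instantiates $t=j$ when $k>j$. Hence the semantics of $D_G\varphi$ simplifies to $\exists F_G\,\forall\lambda'\in\approx_G^j(\lambda)\,\forall\lambda''\in\mathcal{P}(\lambda'[0,j],F_G)\ M,\lambda'',j\models\varphi$.

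The key lemma, proved separately by structural induction, is \emph{locality}: if $\lambda[0,j]=\mu[0,j]$ then $M,\lambda,j\models\psi$ iff $M,\mu,j\models\psi$, for every formula $\psi$. The atomic and Boolean cases are immediate, and the three coalition modalities go through because the only occurrence of the evaluated path in their clauses is the indistinguishability class $\approx_G^j(\lambda)$, which is determined by $\lambda[0,j]$ alone, while every reference to the future is already universally quantified over all strategy-compliant continuations. I expect this induction to be the main obstacle, as one must verify that the suffix of the path beyond stage $j$ never affects the truth value, including for $\Box$ and $\mathcal{U}$.

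Finally I would combine the pieces. Because $\lambda''\in\mathcal{P}(\lambda'[0,j],F_G)$ forces $\lambda''[0,j]=\lambda'[0,j]$, locality replaces $M,\lambda'',j\models\varphi$ by $M,\lambda',j\models\varphi$; and since every $d_i(w)$ is nonempty and $\delta$ is total, $\mathcal{P}(\lambda'[0,j],F_G)$ is nonempty, so the inner $\forall\lambda''$ reduces to the single requirement $M,\lambda',j\models\varphi$. The resulting body no longer mentions $F_G$, and a uniform joint strategy does exist (indistinguishable histories share the same nonempty action set for each member by the assumption $d_i(w)=d_i(w')$ whenever $wR_iw'$), so the leading $\exists F_G$ is discharged by any such strategy. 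What is left is precisely $\forall\lambda'\in\approx_G^j(\lambda)\ M,\lambda',j\models\varphi$, the right-hand side of the $D_G$ clause; taking $G=\{i\}$ gives the $K_i$ clause.
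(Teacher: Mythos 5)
Your proposal is correct and follows essentially the same route as the paper's own proof: unfold the $\mathcal{U}$-clause, observe that with both arguments equal it collapses to truth at stage $j$, and transfer truth between paths agreeing on their prefix up to $j$ (the paper's step ``$\lambda'[0,j]=\lambda''[0,j]$, so $M,\lambda',j\models\varphi$''). The only difference is that you make explicit, as a locality lemma proved by induction, what the paper uses tacitly, and you prove the $D_G$ case deriving $K_i$ as the instance $G=\{i\}$ rather than the other way around.
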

\begin{proof}
  We just prove the first clause, and the second one is proved in a similar way.
  It suffices to show that $M,\lambda, j\models \left\langle\langle i \right \rangle\rangle\varphi\mathcal{U}\varphi$ iff for all $\lambda'\approx_i^j\lambda$, $M, \lambda', j\models\varphi$. The direction from the right to the left is straightforward according to the truth condition for $\mathcal{U}$. We next show the other direction. Suppose $M,\lambda, j\models \left\langle\langle i \right \rangle\rangle\varphi\mathcal{U}\varphi$ and for all $\lambda'\approx_i^j\lambda$, then there is $f_i$ such that for any $\lambda''\in \mathcal{P}(\lambda'[0,j], f_i)$, $M, \lambda'', j\models\varphi$.
  And $\lambda'[0,j] = \lambda''[0,j]$, so  $M, \lambda', j\models\varphi$.
\end{proof}

We demonstrate with the variant shell game that the new semantics justifies our intuitions that the coalition of two guessers by sharing their knowledge can win the game.

\medskip

\begin{ContinueExample}
Consider the model $M_1$ in Figure~\ref{f}.
It's easy to check that at the stage 1 on the left path $\lambda_1:= q_0q_1q_2\cdots$, neither guesser $g_1$ nor guesser $g_2$ has the ability to win at the next stage, i.e., $M_1, \lambda_1, 1\not\models\left \langle\langle g_1\right \rangle\rangle\!\bigcirc\! win$ and $M_1, \lambda_1, 1\not\models\left \langle\langle g_2\right \rangle\rangle\!\bigcirc\! win$. Instead when $g_1$ and $g_2$ form a coalition, after sharing knowledge, the guesser $g_2$ is able to distinguish the history $q_0q_1$ from the history $q_0q'_1$, then they can cooperate to win, i.e., $M_1, \lambda_1, 1\models\left \langle\langle\{g_1,g_2\}\right \rangle\rangle\!\bigcirc\! win$.

For the coalition monotonicity property, consider the model $M_2$ in Figure~\ref{f0}. It's easy to check that at the stage 1 on the left path $\lambda_1:= q_0q_1q_2\cdots$, guesser $g_1$ has the ability to win at the next stage by choosing the left shell, i.e.,
    $M_2, \lambda_1, 1\models\left \langle\langle g_1\right \rangle\rangle\!\bigcirc\! win$. Moreover, when $g_1$ and $g_2$ form a coalition, then they can cooperate to win, i.e.,
    $M_2, \lambda_1, 1\models\left \langle\langle\{g_1,g_2\}\right \rangle\rangle\!\bigcirc\! win$.
 \end{ContinueExample}

\medskip

It should be noted that the reason why alternative semantics~\cite{bulling2014agents,vanDitmarsch14,schobbens2004alternating} fail to keep the coalition monotonicity property is that their interpretations of coalition operators $\left\langle\langle G \right \rangle\rangle\varphi$ are given with respect to either the union of each member's equivalence relation or its transitive reflexive closure. This means that the coalition ability implicitly requires  general knowledge or common knowledge of the group, while neither of them is coalitionally monotonic. Instead distributed knowledge is sufficient for coalition ability to preserve the coalition monotonicity property.

\subsection{Properties of the New Semantics}
 We first show that the new semantics satisfied the desirable properties of coalition ability in traditional coalitional logics~\cite{goranko2006complete,pauly2002modal}.
 \begin{proposition}
 For any $G$, $G_1$, $G_2\subseteq N$ and any $\varphi, \psi\in \mathcal{L}$,
\begin{enumerate}
   \item $\models \neg\left\langle\langle G \right \rangle\rangle\!\bigcirc\bot$ \label{false}
   \item $\models \left\langle\langle G \right \rangle\rangle\!\bigcirc\top$ \label{true}
   \item $\models \left\langle\langle G \right \rangle\rangle\!\bigcirc\!(\varphi\wedge\psi)\rightarrow\left\langle\langle G \right \rangle\rangle\!\bigcirc\!\varphi$ \label{fm}
   \item $\models \left\langle\langle G_1 \right \rangle\rangle\!\bigcirc\!\varphi\rightarrow\left\langle\langle G_2 \right \rangle\rangle\!\bigcirc\!\varphi$ where $G_1\subseteq G_2$ \label{cm}
   \item $\models \left\langle\langle G_1 \right \rangle\rangle\!\bigcirc\!\varphi\wedge\left\langle\langle G_2 \right \rangle\rangle\!\bigcirc\!\psi\rightarrow\left\langle\langle G_1\cup G_2 \right \rangle\rangle\!\bigcirc\!(\varphi\wedge\psi)$ where $G_1\cap G_2 = \emptyset$ \label{super}
  \item $\models \left\langle\langle G \right \rangle\rangle\!\bigcirc\!\varphi\rightarrow\neg\left\langle\langle N\backslash G \right \rangle\rangle\!\bigcirc\!\neg\varphi$\label{regular}
  \end{enumerate}
  Similarly for the $\Box$ and $\mathcal{U}$ operators.
\end{proposition}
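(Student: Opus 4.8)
The plan is to verify each clause directly from the truth condition for $\langle\langle G\rangle\rangle\bigcirc$, and then to note that the $\Box$ and $\mathcal{U}$ cases go through verbatim with the same witnessing strategies, since the only change is the inner temporal condition imposed on $M,\lambda'',k$, which plays no role in the manipulations below. The whole argument rests on three structural facts that I would isolate as a preliminary lemma. First, group indistinguishability is antitone in the group: if $G_1\subseteq G_2$ then $\approx_{G_2}^j=\bigcap_{i\in G_2}\approx_i^j\subseteq\bigcap_{i\in G_1}\approx_i^j=\approx_{G_1}^j$, hence $\approx_{G_2}^j(\lambda)\subseteq\approx_{G_1}^j(\lambda)$. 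Second, uniformity is preserved upward: because $\approx_{G_2}$ refines $\approx_{G_1}$, any strategy uniform w.r.t.\ $G_1$ is automatically uniform w.r.t.\ $G_2$. Third, the outcome set is antitone in the strategy: if $F_{G_2}$ extends $F_{G_1}$ then $\mathcal{P}(h,F_{G_2})=\bigcap_{i\in G_2}\mathcal{P}(h,f_i)\subseteq\mathcal{P}(h,F_{G_1})$. I would also record two auxiliary facts used in the degenerate clauses: since every $d_i(w)$ is nonempty and $\delta$ is total, $\mathcal{P}(h,F_G)\neq\emptyset$ for every $F_G$; and uniform strategies exist at all, by choosing one available action per $\approx_G$-class, which is well defined because $d_i(w)=d_i(w')$ whenever $wR_iw'$.

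Clauses (\ref{false})--(\ref{fm}) are then immediate. For (\ref{false}), reflexivity gives $\lambda\in\approx_G^j(\lambda)$ and nonemptiness gives some $\lambda''\in\mathcal{P}(\lambda[0,j],F_G)$; any candidate witness would force $M,\lambda'',j+1\models\bot$, which is impossible, so $\langle\langle G\rangle\rangle\bigcirc\bot$ is unsatisfiable. For (\ref{true}), any uniform joint strategy witnesses it, as $\top$ holds everywhere. For (\ref{fm}), the very strategy witnessing $\langle\langle G\rangle\rangle\bigcirc(\varphi\wedge\psi)$ also witnesses $\langle\langle G\rangle\rangle\bigcirc\varphi$, since $\varphi\wedge\psi$ entails $\varphi$ at every $\lambda'',j+1$.

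Clauses (\ref{cm}) and (\ref{super}) are the crux and the place where the distributed-knowledge reading pays off. For coalition monotonicity (\ref{cm}), take the witness $F_{G_1}=\langle f_i\rangle_{i\in G_1}$ for $\langle\langle G_1\rangle\rangle\bigcirc\varphi$ and extend it to $F_{G_2}$ by assigning the agents of $G_2\setminus G_1$ arbitrary strategies uniform w.r.t.\ $G_2$; the original $f_i$ remain admissible by upward preservation of uniformity. Then for any $\lambda'\in\approx_{G_2}^j(\lambda)$ and any $\lambda''\in\mathcal{P}(\lambda'[0,j],F_{G_2})$, the first and third facts give $\lambda'\in\approx_{G_1}^j(\lambda)$ and $\lambda''\in\mathcal{P}(\lambda'[0,j],F_{G_1})$, whence $M,\lambda'',j+1\models\varphi$ by hypothesis. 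Superadditivity (\ref{super}) is the same move with a combination rather than an extension: since $G_1\cap G_2=\emptyset$, the witnesses $F_{G_1}$ and $F_{G_2}$ merge without conflict into $F_{G_1\cup G_2}$, uniform w.r.t.\ $G_1\cup G_2$; any $\lambda'\in\approx_{G_1\cup G_2}^j(\lambda)$ lies in both $\approx_{G_1}^j(\lambda)$ and $\approx_{G_2}^j(\lambda)$, and any $\lambda''$ complying with the merged strategy complies with each part, so $\varphi$ and $\psi$ both hold at $\lambda'',j+1$.

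For regularity (\ref{regular}) I would argue by contradiction, exploiting determinacy of the grand coalition. If both $\langle\langle G\rangle\rangle\bigcirc\varphi$ and $\langle\langle N\setminus G\rangle\rangle\bigcirc\neg\varphi$ held at $M,\lambda,j$, merge their witnesses into a full profile $F_N$; because $F_N$ fixes every agent's action, $\mathcal{P}(\lambda[0,j],F_N)$ is a singleton $\{\lambda^\ast\}$, contained in both $\mathcal{P}(\lambda[0,j],F_G)$ and $\mathcal{P}(\lambda[0,j],F_{N\setminus G})$. Using $\lambda\in\approx_G^j(\lambda)$ and $\lambda\in\approx_{N\setminus G}^j(\lambda)$ respectively, we get $M,\lambda^\ast,j+1\models\varphi$ and $M,\lambda^\ast,j+1\models\neg\varphi$, a contradiction. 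The step needing the most care is the bookkeeping around uniformity in (\ref{cm}) and (\ref{super}): one must check that enlarging the coalition never invalidates an inherited strategy, which is exactly the upward-preservation fact. This is also precisely where the union-based or closure-based readings of coalition ability break, since there the uniformity constraint is tied to a relation that fails to refine as the group grows.
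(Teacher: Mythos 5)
Your proof is correct for all six clauses as stated for $\bigcirc$, and it in fact supplies more than the paper does: the paper states this proposition without any proof, so there is no argument of its own to compare against. The three structural facts you isolate --- antitonicity of $\approx_G^j$ in $G$ (because it is an intersection of the members' relations), upward preservation of uniformity, and antitonicity of $\mathcal{P}(h,F_G)$ in the strategy profile --- are exactly the right lemmas, and they are also where the paper's informal remark (that union- or closure-based group relations break coalition monotonicity) acquires precise content. You also correctly supply the two existence facts needed for clauses (\ref{false}) and (\ref{true}) --- nonemptiness of $\mathcal{P}(h,F_G)$, and the existence of uniform strategies via one action per $\approx_G$-class, which is well defined because $d_i(w)=d_i(w')$ whenever $wR_iw'$ --- and the determinacy argument for (\ref{regular}) is sound since $\delta$ is a function, so the merged full profile generates a unique continuation lying in both outcome sets.

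One caveat: your blanket claim that the $\Box$ and $\mathcal{U}$ cases go through ``verbatim'' because the inner temporal condition plays no role is too strong for superadditivity. For clause (\ref{super}) the conclusion requires combining the two inner conditions on the \emph{same} outcome path; this is harmless for $\bigcirc$ (both conditions concern stage $j+1$) and for $\Box$ (both are universal over $k\geq j$), but for $\mathcal{U}$ the two witnesses yield possibly different stages $k_1$ and $k_2$ at which the respective goal formulas hold, so the conjunctive reading $\langle\langle G_1\cup G_2\rangle\rangle(\varphi_1\wedge\varphi_2)\mathcal{U}(\psi_1\wedge\psi_2)$ does not follow by this argument. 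The paper's ``similarly for $\mathcal{U}$'' is vague on which analogue is intended, but a complete write-up should either restrict the $\mathcal{U}$ version of (\ref{super}) or flag the issue. The remaining clauses transfer exactly as you say, since each reuses a single witnessing strategy and only shrinks the sets of quantified paths.
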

Clause \ref{false} says that no coalition $G$ can enforce the falsity while \ref{true} states every coalition $G$ can enforce the truth.
\ref{fm} and \ref{cm} specify the outcome-monotonicity and the coalition-monotonicity, respectively. \ref{super} is the superadditivity property specifying disjoint coalitions can combine their strategies to achieve more. \ref{regular} is called $G$-regularity specifying that it is impossible for a coalition and its complementary set to enforce inconsistency.

The next proposition provides interesting validities about epistemic and coalitional operators.
\begin{proposition} \label{p3}
 For any $G\subseteq N$ and any $\varphi, \psi\in \mathcal{L}$,
 \begin{enumerate}
   \item $\models\left\langle\langle G \right \rangle\rangle\!\bigcirc\!\varphi\ba\left\langle\langle G \right \rangle\rangle\!\bigcirc\! D_G\varphi$ \label{cd1}
   \item  $\models\left\langle\langle G \right \rangle\rangle\!\bigcirc\!\varphi\ba D_G\left\langle\langle G \right \rangle\rangle\!\bigcirc\!\varphi$ \label{cd2}
   \item $\models\left\langle\langle G \right \rangle\rangle\Box\varphi\ba\left\langle\langle G \right \rangle\rangle\Box D_G\varphi$ \label{cd3}
   \item  $\models\left\langle\langle G \right \rangle\rangle\Box\varphi\ba D_G\left\langle\langle G \right \rangle\rangle\Box\varphi$ \label{cd4}
   \item $\models\left\langle\langle G \right \rangle\rangle D_G\varphi\mathcal{U}D_G\psi\rightarrow \left\langle\langle G \right \rangle\rangle\varphi\mathcal{U}\psi$
   \item $\models\left\langle\langle G \right \rangle\rangle\varphi\mathcal{U}\psi\ba D_G\left\langle\langle G \right \rangle\rangle\varphi\mathcal{U}\psi$ \label{cd7}
 \end{enumerate}
\end{proposition}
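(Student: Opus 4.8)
The plan is to isolate two structural lemmas about the semantics and then read off all six clauses from them, together with two trivial facts: that $\models D_G\chi\rightarrow\chi$ (immediate from Proposition~1, since $\approx_G^j$ is reflexive, so $\lambda\in\approx_G^j(\lambda)$) and that every coalition modality is monotone in its argument(s), i.e. from $\models\chi\rightarrow\chi'$ one gets $\models\langle\langle G\rangle\rangle\bigcirc\chi\rightarrow\langle\langle G\rangle\rangle\bigcirc\chi'$ and likewise for $\Box$ and $\mathcal{U}$ (immediate from the truth conditions, as the same witnessing $F_G$ is reused).

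The first lemma I would prove is \emph{prefix-determinacy}: for every formula $\chi$ and any two paths with $\lambda[0,j]=\mu[0,j]$, one has $M,\lambda,j\models\chi$ iff $M,\mu,j\models\chi$. This goes by a routine structural induction; the only point to note is that each coalition clause quantifies over $\approx_G^j(\lambda)$ and, for each witness, over $\mathcal{P}(\lambda'[0,j],F_G)$, both of which depend on the path only through its length-$j$ prefix, so the truth value never inspects the tail. The second lemma is \emph{$G$-invariance}: whenever $\lambda\approx_G^j\mu$, the formulas $\langle\langle G\rangle\rangle\bigcirc\varphi$, $\langle\langle G\rangle\rangle\Box\varphi$ and $\langle\langle G\rangle\rangle\varphi\mathcal{U}\psi$ have the same truth value at stage $j$ on $\lambda$ and on $\mu$. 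This is immediate: since each $\approx_i^j$, and hence $\approx_G^j=\bigcap_{i\in G}\approx_i^j$, is an equivalence relation, $\lambda\approx_G^j\mu$ gives $\approx_G^j(\lambda)=\approx_G^j(\mu)$, and the three truth conditions are literally phrased in terms of this set.

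With these in hand the even-numbered clauses are immediate. For Clause~\ref{cd2} (and symmetrically \ref{cd4} and \ref{cd7}), unfolding $D_G$ via Proposition~1 turns $D_G\langle\langle G\rangle\rangle\bigcirc\varphi$ into ``$\langle\langle G\rangle\rangle\bigcirc\varphi$ holds on every $\lambda'\approx_G^j\lambda$,'' which by $G$-invariance is equivalent to its holding on $\lambda$ itself. Clause~5 is only one-directional: if $F_G$ witnesses $\langle\langle G\rangle\rangle D_G\varphi\mathcal{U}D_G\psi$ then, applying $\models D_G\varphi\rightarrow\varphi$ and $\models D_G\psi\rightarrow\psi$ pointwise along each compliant path, the same $F_G$ witnesses $\langle\langle G\rangle\rangle\varphi\mathcal{U}\psi$. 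The ``$\leftarrow$'' halves of \ref{cd1} and \ref{cd3} are this same monotonicity-plus-reflexivity argument.

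The real work, and the step I expect to be the main obstacle, is the ``$\rightarrow$'' direction of Clauses~\ref{cd1} and \ref{cd3}, e.g. $\langle\langle G\rangle\rangle\Box\varphi\rightarrow\langle\langle G\rangle\rangle\Box D_G\varphi$. I would show that the very strategy $F_G$ witnessing $\langle\langle G\rangle\rangle\Box\varphi$ also witnesses $\langle\langle G\rangle\rangle\Box D_G\varphi$. Fix $\lambda'\in\approx_G^j(\lambda)$, $\lambda''\in\mathcal{P}(\lambda'[0,j],F_G)$, $k\geq j$, and an arbitrary $\mu\approx_G^k\lambda''$; I must derive $M,\mu,k\models\varphi$. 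From $\mu[0,k]\approx_G\lambda''[0,k]$ together with $\lambda''[0,j]=\lambda'[0,j]\approx_G\lambda[0,j]$ I first obtain $\mu\in\approx_G^j(\lambda)$. The crux is then to verify that $\mu$ actually complies with $F_G$ on the segment $j\leq t<k$: for each such $t$ and each $i\in G$, history equivalence forces $\mu$ and $\lambda''$ to agree on $i$'s own action at stage $t$ (Definition~2, clause~3), while uniformity of $f_i$ applied to $\mu[0,t]\approx_G\lambda''[0,t]$ forces $f_i(\mu[0,t])=f_i(\lambda''[0,t])$, and $\lambda''$ complies with $F_G$; chaining these equalities shows $\mu$'s $G$-actions match $F_G$'s prescription up to stage $k$. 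Hence $\mu[0,k]$ extends to some $\nu\in\mathcal{P}(\mu[0,j],F_G)$ with $\nu[0,k]=\mu[0,k]$; the witness property of $F_G$ gives $M,\nu,k\models\varphi$, and prefix-determinacy transfers this to $M,\mu,k\models\varphi$. The $\bigcirc$-case of \ref{cd1} is this argument specialized to $k=j+1$. The subtlety to get right is precisely that $\mu$ need not comply with $F_G$ \emph{beyond} stage $k$, so one cannot reason about $\mu$ directly but must route through the genuinely compliant $\nu$ via prefix-determinacy; and that $\approx_G$ records agreement on $G$'s own actions, without which the compliance step would collapse.
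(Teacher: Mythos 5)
Your proof is correct and follows essentially the same route as the paper's: the key step in both is that the witnessing strategy $F_G$ itself verifies the strengthened formula, because clause~3 of the history-equivalence (agreement on $G$'s own actions) together with uniformity forces any $\approx_G^k$-equivalent path to comply with $F_G$ up to stage $k$, after which one routes through a genuinely compliant extension and transfers truth along the shared prefix; the even-numbered clauses likewise follow in both from $\approx_G^j(\lambda')=\approx_G^j(\lambda)$. The only differences are presentational — you isolate prefix-determinacy and $G$-invariance as explicit lemmas and argue the forward direction of Clauses~\ref{cd1} and \ref{cd3} directly, where the paper argues by contradiction.
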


\begin{proof}
We only give proof for the first two clauses and the proof for $\Box$, $\mathcal{U}$ is similar.

\ref{cd1}. For every iCGS $M$, every path $\lambda$ of $M$ and every stage $j\in \mathbb{N}$ on $\lambda$, assume $M, \lambda, j\models\left\langle\langle G \right \rangle\rangle\!\bigcirc\!\varphi$, then there is $F_G = \langle f_i\rangle_{i\in G}$ such that for all $\lambda'\in\approx_G^j(\lambda)$, for all $\lambda''\in\mathcal{P}(F_G, \lambda'[0,j])$, $M, \lambda'', j+1\models\varphi$. We next show that $F_G$ is the joint strategy to verify $\left\langle\langle G \right \rangle\rangle\!\bigcirc\! D_G\varphi$. Suppose for a contradiction that there is $\lambda_1\in \approx_G^j(\lambda)$, there is $\lambda_2\in\mathcal{P}(F_G, \lambda_1[0,j])$, there is $\lambda_3\in\approx_G^{j+1}(\lambda_2)$ such that $M, \lambda_3, j+1\not\models\varphi$. Then $\lambda_3\in\approx_G^j(\lambda)$ and $\theta_i(\lambda_3,j) = \theta_i(\lambda_2,j) = f_i(\lambda_2[0,j])$ for every $i\in G$, so there is some $\lambda^*\in \bigcup_{\lambda'\in\approx_G^j(\lambda)}\mathcal{P}(F_G, \lambda'[0,j])$ such that $\lambda^*[0,j+1] = \lambda_3[0,j+1]$. And by assumption we have $M, \lambda^*, j+1\models\varphi$. It follows that $M, \lambda_3, j+1\models\varphi$: contradiction. Thus,  $M, \lambda, j\models\left\langle\langle G \right \rangle\rangle\!\bigcirc\! D_G\varphi$. The other direction is straightforward.

\ref{cd2}. For every iCGS $M$, every path $\lambda$ of $M$ and every stage $j\in \mathbb{N}$ on $\lambda$, assume $M, \lambda, j\models\left\langle\langle G \right \rangle\rangle\!\bigcirc\!\varphi$, then there is $F_G = \langle f_i\rangle_{i\in G}$ such that for all $\lambda'\in\approx_G^j(\lambda)$, for all $\lambda''\in\mathcal{P}(F_G, \lambda'[0,j])$, $M, \lambda'', j+1\models\varphi$. We next prove that for any $\lambda^*\in\approx_G^j(\lambda)$, $M, \lambda^*, j\models\left\langle\langle G \right \rangle\rangle\!\bigcirc\!\varphi$. We consider the strategy $F_G$ and it is easy to check that for all $\lambda_1\in\approx_G^j(\lambda^*)$, for all $\lambda_2\in\mathcal{P}(F_G, \lambda_1[0,j])$, $M, \lambda_2, j+1\models\varphi$ as $\approx_G^j(\lambda^*) = \approx_G^j(\lambda)$. Thus, $M, \lambda, j\models D_G\left\langle\langle G \right \rangle\rangle\!\bigcirc\!\varphi$. The other direction is straightforward.
\end{proof}

Note that it is not generally the case that $\models\left\langle\langle G \right \rangle\rangle\varphi\mathcal{U}\psi\rightarrow \left\langle\langle G \right \rangle\rangle D_G\varphi\mathcal{U}D_G\psi$.
Here is a counter-example. Consider the model $M_3$ in Figure \ref{f2} with two agents 1 and 2 and states $\{q_0, q_1, q'_1, q_2, q'_2\}$, where $q_1R_1q'_1$, but not for 2, and all the other states can be distinguished by both agents. There are two propositions $p$, $q$, and $\pi(p) = \{q_1\}$, $\pi(q) = \{q'_1, q_2\}$. The transitions are depicted in Figure \ref{f2}. Consider the left path $\lambda_1:= q_0q_1q_2\cdots$. It is easy to check that $M_3, \lambda_1, 1\models \left\langle\langle 1 \right \rangle\rangle p\mathcal{U}q$, but $M_3, \lambda_1, 1\not\models \left\langle\langle 1 \right \rangle\rangle K_1p\mathcal{U}K_1q$.
\begin{figure}
\begin{minipage}[t]{0.5\linewidth}
\centering
 \includegraphics[width=1.33in]{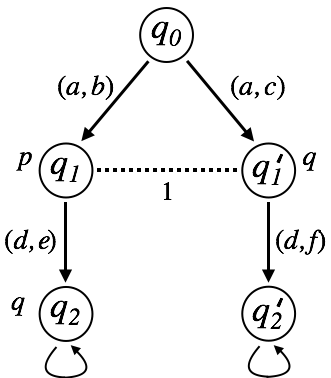} 
 \caption{{\footnotesize the counter-model $M_3$}}
 \label{f2} 
 \end{minipage}
\begin{minipage}[t]{0.5\linewidth}
\centering
 \includegraphics[width=1.33in]{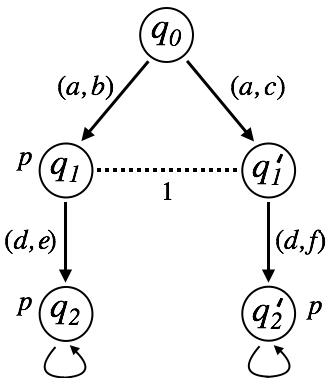} 
 \caption{{\footnotesize the counter-model $M_4$}}
 \label{f1} 
\end{minipage}
\vspace{-4mm}
 \end{figure}

It follows from Proposition \ref{p3} that the distributed knowledge operator and the coalition operator are interchangeable w.r.t temporal operators $\bigcirc$ and $\Box$.
\begin{corollary}
For any $G\subseteq N$ and any $\varphi\in \mathcal{L}$,
  \begin{itemize}
    \item $\models\left\langle\langle G \right \rangle\rangle\!\bigcirc\! D_G\varphi\ba D_G\left\langle\langle G \right \rangle\rangle\!\bigcirc\!\varphi$ \label{cd5}
   \item $\models\left\langle\langle G \right \rangle\rangle\Box D_G\varphi\ba D_G\left\langle\langle G \right \rangle\rangle\Box\varphi$ \label{cd6}
  \end{itemize}
\end{corollary}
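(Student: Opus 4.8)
The plan is to derive both equivalences purely propositionally from Proposition~\ref{p3}, using only the symmetry and transitivity of the biconditional $\ba$; no fresh semantic argument over iCGSs is required, which is exactly why this is stated as a corollary rather than proved from scratch.

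For the first item I would chain clauses~\ref{cd1} and~\ref{cd2} through their common ``pivot'' formula $\left\langle\langle G \right\rangle\rangle\!\bigcirc\!\varphi$. Reading \ref{cd1} in the direction $\left\langle\langle G \right\rangle\rangle\!\bigcirc\! D_G\varphi\ba\left\langle\langle G \right\rangle\rangle\!\bigcirc\!\varphi$ (legitimate since $\ba$ is symmetric) and then composing with \ref{cd2}, namely $\left\langle\langle G \right\rangle\rangle\!\bigcirc\!\varphi\ba D_G\left\langle\langle G \right\rangle\rangle\!\bigcirc\!\varphi$, yields the chain $\left\langle\langle G \right\rangle\rangle\!\bigcirc\! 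D_G\varphi\ba\left\langle\langle G \right\rangle\rangle\!\bigcirc\!\varphi\ba D_G\left\langle\langle G \right\rangle\rangle\!\bigcirc\!\varphi$; transitivity then delivers $\left\langle\langle G \right\rangle\rangle\!\bigcirc\! D_G\varphi\ba D_G\left\langle\langle G \right\rangle\rangle\!\bigcirc\!\varphi$. The second item is obtained in exactly the same shape by replacing $\bigcirc$ with $\Box$ and invoking \ref{cd3} and \ref{cd4} in place of \ref{cd1} and \ref{cd2}: from $\left\langle\langle G \right\rangle\rangle\Box D_G\varphi\ba\left\langle\langle G \right\rangle\rangle\Box\varphi$ and $\left\langle\langle G \right\rangle\rangle\Box\varphi\ba D_G\left\langle\langle G \right\rangle\rangle\Box\varphi$ one reads off $\left\langle\langle G \right\rangle\rangle\Box D_G\varphi\ba\left\langle\langle G \right\rangle\rangle\Box\varphi\ba D_G\left\langle\langle G \right\rangle\rangle\Box\varphi$, and transitivity closes it.

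There is essentially no hard step: the corollary is a formal consequence of validities already established, so the only thing to watch is bookkeeping---matching each conjunct to the correct clause and keeping the shared pivot formula fixed while $D_G$ migrates from inside the coalition operator to outside it. I would also flag why no $\mathcal{U}$-analogue is listed. Clause~\ref{cd7} does give $\left\langle\langle G \right\rangle\rangle\varphi\mathcal{U}\psi\ba D_G\left\langle\langle G \right\rangle\rangle\varphi\mathcal{U}\psi$, the counterpart of \ref{cd2} and \ref{cd4}, but the counterpart of \ref{cd1} and \ref{cd3} is only a one-way implication (clause~5, with the counter-model $M_3$ refuting the converse). Hence the pivot equivalence needed to run the same chaining argument is unavailable for until, and the corollary is correctly confined to $\bigcirc$ and $\Box$.
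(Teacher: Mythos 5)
Your proposal is correct and matches the paper's intent: the corollary is stated without proof precisely because it follows by chaining the biconditionals of Proposition~3 (clauses 1--2 for $\bigcirc$, clauses 3--4 for $\Box$) through the pivot formula $\left\langle\langle G \right\rangle\rangle\!\bigcirc\!\varphi$ (respectively $\left\langle\langle G \right\rangle\rangle\Box\varphi$), exactly as you describe. Your side remark on why no $\mathcal{U}$-analogue appears is also consistent with the paper, which only establishes the one-way implication for until and refutes its converse with the counter-model $M_3$.
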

\comment{

We finally show that the change in semantics has important consequences for the resulting logic. We compare the validity sets induced by the new semantics and the most similar one, called no-forgetting semantics~\cite{bulling2014agents}. The no-forgetting semantics was recently designed to eliminate the counterintuitive effects of the standard perfect recall semantics of ATL: agents may forget the past even though they adapt perfect recall strategies.
Intuitively, each formula can be interpreted as a property of a game, and then a valid formula specifies a general property of a game that universally holds. Thus, by comparing validity sets of different semantics, we are able to compare the general properties of games induced by the semantics~\cite{jamroga2011comparing}. Specifically, we have the following proposition.
\begin{proposition}
For any formula $\varphi\in\mathcal{L}$, if $\models^{nf}\varphi$, then $\models\varphi$.
\end{proposition}

\begin{proof}
  This is proved by induction on the structure of $\varphi$.
\end{proof}

In the light of the coalition monotonicity property, it is not the case for the converse. Thus, under imperfect information the new semantics describes a more specific class of games than the no-forgetting semantics.
}
\section{The Fixed-point Characterization}
In this section, we will investigate the interplay between knowledge shared by a group of agents and its coalition ability in ATL with imperfect information and perfect recall. We first show that, similar to~\cite{Belar14,Belar15}, the standard fixed{-}point characterizations of coalition operators for ATL~\cite{goranko2006complete} fail under our new semantics.
\begin{proposition}
For any $G\subseteq N$ and any $\varphi, \psi\in \mathcal{L}$,
\begin{itemize}
  \item $\not\models\varphi\wedge\left\langle\langle G \right \rangle\rangle\!\bigcirc\!\left\langle\langle G \right \rangle\rangle\Box\varphi\rightarrow \left\langle\langle G \right \rangle\rangle\Box\varphi$
  \item $\not\models\varphi\vee\left\langle\langle G \right \rangle\rangle\!\bigcirc\!\left\langle\langle G \right \rangle\rangle\Diamond\varphi\rightarrow\left\langle\langle G \right \rangle\rangle\Diamond\varphi$
  \item $\not\models\left\langle\langle G \right \rangle\rangle\Diamond\varphi\rightarrow\varphi\vee\left\langle\langle G \right \rangle\rangle\!\bigcirc\!\left\langle\langle G \right \rangle\rangle\Diamond\varphi$
  \item $\not\models\psi\vee(\varphi\wedge\left\langle\langle G \right \rangle\rangle\!\bigcirc\!\left\langle\langle G \right \rangle\rangle\varphi\mathcal{U}\psi)\rightarrow\left\langle\langle G \right \rangle\rangle\varphi\mathcal{U}\psi$
  \item $\not\models\left\langle\langle G \right \rangle\rangle\varphi\mathcal{U}\psi\rightarrow\psi\vee(\varphi\wedge\left\langle\langle G \right \rangle\rangle\!\bigcirc\!\left\langle\langle G \right \rangle\rangle\varphi\mathcal{U}\psi)$
\end{itemize}
\end{proposition}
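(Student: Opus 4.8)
The plan is to refute each of the five implications by exhibiting a counter-model, i.e.\ an iCGS $M$ together with a path $\lambda$ and a stage $j$ at which the antecedent holds but the consequent fails. All five separations can be witnessed on small structures of the kind drawn in Figures~\ref{f2} and~\ref{f1} (the counter-models $M_3$ and $M_4$); the decisive ingredient in each case is to choose the relations $R_i$ so that the quantification $\forall\lambda'\in\approx_G^j(\lambda)$ in the semantics of the coalition operator does the separating work. I would therefore organise the argument into two families according to the direction of the fixed-point unfolding.

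For the three \emph{unfolding $\rightarrow$ operator} directions (the first, second and fourth items), the source of failure is that $\langle\langle G\rangle\rangle\Box$, $\langle\langle G\rangle\rangle\Diamond$ and $\langle\langle G\rangle\rangle\cdot\mathcal{U}\cdot$ implicitly demand that the goal hold \emph{distributedly} over the whole class $\approx_G^j(\lambda)$ — recall Proposition~\ref{p3}(\ref{cd3}), $\langle\langle G\rangle\rangle\Box\varphi\ba\langle\langle G\rangle\rangle\Box D_G\varphi$, together with the semantic characterization of $D_G$ — whereas the base case $\varphi$ (resp.\ $\psi$) of the unfolding constrains only the \emph{actual} path $\lambda$. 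Concretely I would pick $\lambda$ and $j$ so that $\varphi$ (resp.\ $\psi$) holds at $(\lambda,j)$ but fails at $(\lambda',j)$ for some $\lambda'\in\approx_G^j(\lambda)$, while arranging $\langle\langle G\rangle\rangle\bigcirc\langle\langle G\rangle\rangle\Box\varphi$ (resp.\ the $\Diamond$/$\mathcal{U}$ analogue) to hold because from every one-step successor reachable over the class the goal becomes distributedly known. Then the left-hand side is satisfied, yet $\langle\langle G\rangle\rangle\Box\varphi$ already fails at stage $j$ because the indistinguishable $\lambda'$ violates the invariant there; for the $\Diamond$ and $\mathcal{U}$ items it is cleanest to let $\lambda'$ be \emph{doomed} (never reaching the goal, with $G$ having no strategy to re-establish it), so that the disjunct $\varphi$/$\psi$ holds on $\lambda$ while the operator fails for want of a uniform joint strategy covering $\lambda'$.

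For the two \emph{operator $\rightarrow$ unfolding} directions (the third and fifth items) I would exploit the present/future boundary hidden in the quantifier $\exists k\ge j$. Since knowledge only grows along a play, $\approx_G^{j+1}(\lambda'')\subseteq\approx_G^j(\lambda)$, so the inner eventuality cannot be broken merely by a loss of knowledge; the only way to separate the two sides is to satisfy $\langle\langle G\rangle\rangle\Diamond\varphi$ (resp.\ $\langle\langle G\rangle\rangle\varphi\mathcal{U}\psi$) at $(\lambda,j)$ by witnessing the goal \emph{at the current stage} $k=j$ on some indistinguishable $\lambda'$, while $\varphi$ fails on the actual path $\lambda$ at $j$. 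I would then build $\lambda'$ so that after one transition the goal is permanently unreachable from it. Because the nested $\langle\langle G\rangle\rangle\bigcirc\langle\langle G\rangle\rangle\Diamond\varphi$ re-evaluates the inner eventuality at stage $j+1$ (forcing $k\ge j+1$) on every successor of the class, the successor of $\lambda'$ refutes it, and since the disjunct $\varphi$ also fails at $(\lambda,j)$ the whole right-hand side is false. Note that $M_3$ of Figure~\ref{f2}, already used above for the $\mathcal{U}$-distribution counterexample, realises exactly this pattern and can be reused for the fifth item.

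The main obstacle is not the conceptual separation but the bookkeeping needed to verify the nested coalition operators honestly. For each model I must check that a \emph{single} uniform joint strategy witnesses the antecedent for \emph{every} member of $\approx_G^j(\lambda)$ at the outer stage, and, dually, that \emph{no} inner strategy can rescue $\langle\langle G\rangle\rangle\Box\varphi$ (etc.) at stage $j+1$ for every path in the refined class $\approx_G^{j+1}(\lambda'')$. This forces a careful choice of the $R_i$ and of $\delta$ so that the indistinguishability classes shrink in precisely the intended way after one step, and the delicate point is to make one compact iCGS (or the two of Figures~\ref{f2}--\ref{f1}) carry several of these separations at once rather than inventing a fresh structure for each clause.
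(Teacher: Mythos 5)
Your proposal is correct and follows essentially the same route as the paper: refutation by small two-branch counter-models of the $M_3$/$M_4$ kind, where the universal quantification over $\approx_G^j(\lambda)$ at the evaluation stage does the separating work (the paper itself only writes out the first clause explicitly, via $M_4$ with $\pi(p)=\{q_1,q_2,q'_2\}$, and leaves the remaining four implicit). Your use of $M_3$ for the $\Diamond$- and $\mathcal{U}$-unfolding directions and of a ``doomed'' indistinguishable branch for the remaining clauses is exactly what those models (or trivial revaluations of them) provide.
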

Here is a counter-example for the first one. Consider the model $M_4$ in Figure \ref{f1} which is obtained from $M_3$ by just changing the valuations.  
There is one proposition $p$, and $\pi(p) = \{q_1, q_2, q'_2\}$.
Consider $\varphi:= p$ and the left path $\lambda_1:= q_0q_1q_2\cdots$. Then it is easy to check that $M_4, \lambda_1, 1\models p$ and
$M_4, \lambda_1, 1\models \left\langle\langle 1 \right \rangle\rangle\!\bigcirc\!\left\langle\langle 1 \right \rangle\rangle\Box p$, but
$M_4, \lambda_1, 1\not\models \left\langle\langle 1 \right \rangle\rangle\Box p$. Thus, $M_4, \lambda_1, 1\not\models p\wedge\left\langle\langle 1 \right \rangle\rangle\!\bigcirc\!\left\langle\langle 1 \right \rangle\rangle\Box p\rightarrow \left\langle\langle 1 \right \rangle\rangle\Box p$.

On the other hand, we have the following proposition showing that the converse direction for $\Box$ holds under the new semantics.
\begin{proposition}\label{rfp}
For any $G\subseteq N$ and any $\varphi\in \mathcal{L}$,
$\models\left\langle\langle G \right \rangle\rangle\Box\varphi\rightarrow\varphi\wedge\left\langle\langle G \right \rangle\rangle\!\bigcirc\!\left\langle\langle G \right \rangle\rangle\Box\varphi$
\end{proposition}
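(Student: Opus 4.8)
The plan is to prove both conjuncts on the right-hand side using the single joint strategy that already witnesses the antecedent. Fix an iCGS $M$, a path $\lambda$ and a stage $j$, and assume $M,\lambda, j\models\left\langle\langle G \right \rangle\rangle\Box\varphi$; let $F_G=\langle f_i\rangle_{i\in G}$ be a joint strategy such that for every $\lambda'\in\approx_G^j(\lambda)$, every $\lambda''\in\mathcal{P}(\lambda'[0,j], F_G)$ and every $k\geq j$ we have $M,\lambda'',k\models\varphi$. The whole proof consists in showing that this same $F_G$ serves simultaneously as the witness for the outer $\left\langle\langle G \right \rangle\rangle\!\bigcirc\!$ and, at every relevant successor history, as the witness for the inner $\left\langle\langle G \right \rangle\rangle\Box$.

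First I would record an auxiliary fact, proved by induction on the structure of formulas: the truth of any $\psi$ at $(M,\lambda,j)$ depends only on the prefix $\lambda[0,j]$, i.e.\ if $\lambda[0,j]=\mu[0,j]$ then $M,\lambda,j\models\psi$ iff $M,\mu,j\models\psi$. The atomic and Boolean cases are immediate, and for the three strategic operators all quantifications range over $\approx_G^j(\lambda)$ and over continuations drawn from $\mathcal{P}(\cdot,F_G)$, both of which are determined by $\lambda[0,j]$ alone; no clause ever inspects the suffix $\lambda[j,\infty]$. Granting this, the first conjunct $M,\lambda,j\models\varphi$ is immediate: take $\lambda'=\lambda\in\approx_G^j(\lambda)$ and any $\lambda''\in\mathcal{P}(\lambda[0,j], F_G)$ (this set is nonempty because action sets are nonempty, so $F_G$ always extends a finite history to a full path); the instance $k=j$ of the assumption gives $M,\lambda'',j\models\varphi$, and since $\lambda''[0,j]=\lambda[0,j]$ the auxiliary fact transfers this to $M,\lambda,j\models\varphi$.

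For the second conjunct, take any $\lambda'\in\approx_G^j(\lambda)$ and any $\lambda''\in\mathcal{P}(\lambda'[0,j], F_G)$; the goal is $M,\lambda'',j+1\models\left\langle\langle G \right \rangle\rangle\Box\varphi$, and I claim $F_G$ again works. So let $\mu\in\approx_G^{j+1}(\lambda'')$ and $\mu'\in\mathcal{P}(\mu[0,j+1], F_G)$ be arbitrary; it suffices to prove $M,\mu',k\models\varphi$ for all $k\geq j+1$. The key reduction is to exhibit $(\mu,\mu')$ as an instance already covered by the assumption. Restricting $\mu\approx_G^{j+1}\lambda''$ to the first $j$ stages and chaining with $\lambda''[0,j]=\lambda'[0,j]\approx_G\lambda[0,j]$ yields $\mu\in\approx_G^j(\lambda)$. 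Since $\mu'[0,j+1]=\mu[0,j+1]$, the only thing left is to check that $\mu'$ already obeys $F_G$ at stage $j$, i.e.\ $\mu'\in\mathcal{P}(\mu[0,j], F_G)$: for each $i\in G$ the action $\theta_i(\mu,j)$ equals $\theta_i(\lambda'',j)$ by clause 3 of $\approx_G$, equals $f_i(\lambda'[0,j])$ because $\lambda''\in\mathcal{P}(\lambda'[0,j], F_G)$, and equals $f_i(\mu[0,j])$ because $\mu[0,j]\approx_G\lambda'[0,j]$ and $f_i$ is uniform. Hence $\mu'$ complies with $F_G$ from stage $j$ onward, so $\mu'\in\mathcal{P}(\mu[0,j], F_G)$, and the assumption applied to $\mu\in\approx_G^j(\lambda)$ gives $M,\mu',k\models\varphi$ for all $k\geq j$, in particular for all $k\geq j+1$.

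The hard part will be precisely this last compliance check. The inner $\Box$ is evaluated from stage $j+1$, while the assumption only guarantees good behaviour for continuations that follow $F_G$ already from stage $j$; one therefore has to argue that the single ``missing'' move at stage $j$ is forced to coincide with $F_G$. This is exactly where the uniformity of $F_G$ with respect to $\approx_G$ is indispensable: it converts $\approx_G$-indistinguishability of the stage-$j$ prefixes into equality of $G$'s stage-$j$ actions, so that a path obeying $F_G$ from $j+1$ on is recognised as one obeying $F_G$ from $j$ on. Everything else is routine manipulation of the prefix relation $\approx_G$ and of $\mathcal{P}(\cdot,F_G)$, using that $\approx_G^{j+1}$ refines $\approx_G^{j}$ on prefixes and that $\approx_G$ is an equivalence relation.
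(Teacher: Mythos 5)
Your proof is correct and follows essentially the same route as the paper's: reuse the witnessing joint strategy $F_G$ for both the outer $\left\langle\langle G \right\rangle\rangle\!\bigcirc$ and the inner $\left\langle\langle G \right\rangle\rangle\Box$, and fold any pair $(\mu,\mu')$ arising at stage $j+1$ back into an instance of the original assumption at stage $j$. The only difference is one of exposition: you spell out the uniformity argument showing that a path obeying $F_G$ from stage $j+1$ on already obeys it at stage $j$, a step the paper asserts without detail, and you isolate the prefix-dependence of satisfaction as an explicit auxiliary lemma.
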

\begin{proof}
  For every iCGS $M$, every path $\lambda$ of $M$ and every stage $j\in \mathbb{N}$ on $\lambda$, assume $M, \lambda, j\models \left\langle\langle G \right \rangle\rangle\Box\varphi$, then there is $F_G = \langle f_i\rangle_{i\in G}$ such that for all $\lambda'\in \approx_G^j(\lambda)$, for all $\lambda''\in \mathcal{P}(F_G, \lambda'[0,j])$, for all $k\geq j$ $M, \lambda'', k\models\varphi$.
  In particular, $\lambda\in \approx_G^j(\lambda)$, then for all $\lambda^*\in \mathcal{P}(F_G, \lambda[0,j])$,  $M, \lambda^*, j\models\varphi$. And by $\lambda[0,j] = \lambda^*[0, j]$, so $M, \lambda, j\models \varphi$.

We next prove that $M, \lambda,j\models\left\langle\langle G \right \rangle\rangle\!\bigcirc\!\left\langle\langle G \right \rangle\rangle\Box\varphi$. It suffices to show that $F_G$ is just the joint strategy for the both coalition operators. That is, for all $\lambda_1\in\approx_G^j(\lambda)$, for all $\lambda_2\in\mathcal{P}(F_G,\lambda_1[0,j])$, for all $\lambda_3\in\approx_G^{j+1}(\lambda_2)$, for all $\lambda_4\in\mathcal{P}(F_G,\lambda_3[0,j+1])$, we want to prove that for all $r\geq j+1$ $M, \lambda_4, r\models \varphi$. As $\lambda_4\in\mathcal{P}(F_G,\lambda_3[0,j+1])$, then $\lambda_4[0, j+1] = \lambda_3[0, j+1]$, then $\lambda_4\in\approx_G^{j+1}(\lambda_2)$, then $\lambda_4\in\approx_G^{j}(\lambda_2)$ and $\lambda_4\in\bigcup_{\lambda_1\in\approx_G^j(\lambda)}\mathcal{P}(F_G,\lambda_1[0,j])$. So by the assumption we have that $M, \lambda_4, t\models\varphi$, so $M, \lambda,j\models\left\langle\langle G \right \rangle\rangle\!\bigcirc\!\left\langle\langle G \right \rangle\rangle\Box\varphi$.

 Thus, $M, \lambda,j\models \varphi\wedge\left\langle\langle G \right \rangle\rangle\!\bigcirc\!\left\langle\langle G \right \rangle\rangle\Box\varphi$.
\end{proof}

 We now present the main result about the interactions of group knowledge and coalition ability for ATL with imperfect information and perfect recall. Recall that $\widehat{K}$ and $\widehat{D}$ be the dual operators of $K$ and $D$, respectively. 
\begin{theorem}
For any $G\subseteq N$ and for any $\varphi, \psi\in\mathcal{L}$,
\begin{enumerate}
\item $\models \left\langle\langle G \right \rangle\rangle\Box\varphi\ba D_G\varphi\wedge\left\langle\langle G \right \rangle\rangle\!\bigcirc\!\left\langle\langle G \right \rangle\rangle\Box\varphi$ \label{Box}
  \item $\models \left\langle\langle G \right \rangle\rangle\Diamond\varphi\rightarrow\widehat{D}_G \varphi\vee\left\langle\langle G \right \rangle\rangle\!\bigcirc\!\left\langle\langle G \right \rangle\rangle\Diamond\varphi$ \label{Dnecessary}
  \item $\models D_G\varphi\vee\left\langle\langle G \right \rangle\rangle\!\bigcirc\!\left\langle\langle G \right \rangle\rangle\Diamond\varphi\rightarrow\left\langle\langle G \right \rangle\rangle\Diamond\varphi $\label{Dsufficient}
  \item $\models \left\langle\langle G \right \rangle\rangle\varphi\mathcal{U}\psi\rightarrow\widehat{D}_G \psi\vee(D_G\varphi\wedge\left\langle\langle G \right \rangle\rangle\!\bigcirc\!\left\langle\langle G \right \rangle\rangle\varphi\mathcal{U}\psi)$\label{Unecessary}
  \item $\models D_G\psi\vee(D_G\varphi\wedge\left\langle\langle G \right \rangle\rangle\!\bigcirc\!\left\langle\langle G \right \rangle\rangle\varphi\mathcal{U}\psi)\rightarrow \left\langle\langle G \right \rangle\rangle\varphi\mathcal{U}\psi$ \label{Usufficient}
\end{enumerate}
\end{theorem}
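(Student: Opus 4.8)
The plan is to prove the five clauses by splitting each biconditional or implication into a \emph{necessary} part (clause~\ref{Box} left-to-right, and clauses~\ref{Dnecessary},~\ref{Unecessary}), where a coalition-over-temporal formula is the antecedent, and a \emph{sufficient} part (clause~\ref{Box} right-to-left, and clauses~\ref{Dsufficient},~\ref{Usufficient}), where it is the consequent. Before starting I would record two auxiliary facts. First, a \textbf{locality} observation: by a routine induction on formula structure, the truth value of any $\chi\in\mathcal{L}$ at $(\lambda,j)$ depends only on the finite prefix $\lambda[0,j]$, since the propositional base case is immediate and every coalition clause quantifies only over $\approx_G^j(\lambda)$ and over strategy outcomes, both of which are determined by $\lambda[0,j]$. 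Hence whenever $\lambda''[0,j]=\lambda'[0,j]$ we have $M,\lambda'',j\models\chi$ iff $M,\lambda',j\models\chi$. Second, the formulas $\langle\langle G\rangle\rangle\Box\varphi$, $\langle\langle G\rangle\rangle\Diamond\varphi$ and $\langle\langle G\rangle\rangle\varphi\mathcal{U}\psi$ are \emph{$D_G$-closed}, i.e.\ their truth sets are unions of $\approx_G^{j}$-classes; this is Proposition~\ref{p3} (clauses~\ref{cd4} and~\ref{cd7}), using the abbreviation $\Diamond=\top\mathcal{U}(\cdot)$.

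For the necessary directions I would reuse the \emph{single} witnessing strategy. In clause~\ref{Box} (left-to-right), a witness $F_G$ for $\langle\langle G\rangle\rangle\Box\varphi$ forces $\varphi$ at stage $j$ on every $F_G$-outcome of every $\lambda'\in\approx_G^j(\lambda)$; by locality $\varphi$ then holds at $(\lambda',j)$ for all such $\lambda'$, giving $D_G\varphi$, while the $\langle\langle G\rangle\rangle\bigcirc\langle\langle G\rangle\rangle\Box\varphi$ conjunct is exactly Proposition~\ref{rfp}. For clauses~\ref{Dnecessary} and~\ref{Unecessary} I would assume the left disjunct $\widehat{D}_G(\cdot)$ fails, i.e.\ $D_G\neg\varphi$ (resp.\ $D_G\neg\psi$) holds; locality then forces every fulfilment to occur at some stage $\ge j+1$, which for clause~\ref{Unecessary} puts stage $j$ inside the ``until'' window and thereby yields the conjunct $D_G\varphi$. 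It then remains to show that the \emph{tail} of the same $F_G$ witnesses the inner formula at every reachable stage-$(j+1)$ point. This rests on a \textbf{compliance-bridging} fact: if $\mu\approx_G^{j+1}\lambda''$ and $\lambda''$ is an $F_G$-outcome, then uniformity of $F_G$ together with condition~(3) of the history-equivalence $\approx_i$ forces the agents of $G$ along $\mu$ to take the $F_G$-prescribed action already at stage $j$, so any $\mu'\in\mathcal{P}(F_G,\mu[0,j+1])$ in fact lies in $\mathcal{P}(F_G,\mu[0,j])$ with $\mu\in\approx_G^j(\lambda)$; the original witness property then applies, and since fulfilment is pushed past $j$, it delivers the inner $\Diamond$ / $\mathcal{U}$ from stage $j+1$.

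For the sufficient directions the left-disjunct cases are immediate: $D_G\varphi$ (resp.\ $D_G\psi$) makes $\varphi$ (resp.\ $\psi$) hold at stage $j$ on all $\approx_G^j(\lambda)$-paths by locality, fulfilling $\Diamond$/$\mathcal{U}$ at $k=j$ under any strategy. The substantive case starts from $\langle\langle G\rangle\rangle\bigcirc\langle\langle G\rangle\rangle\theta$, with $\theta$ the temporal subformula, and must manufacture a single uniform strategy witnessing $\langle\langle G\rangle\rangle\theta$ at $(\lambda,j)$. I would \textbf{glue}: keep the one-step strategy $F^0$ on histories of length $j$, and on each history of length $\ge j+1$ act according to the continuation witness attached to the $\approx_G^{j+1}$-class of its length-$(j+1)$ prefix. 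The point making this coherent is that the coalition-operator semantics already quantifies over an entire $\approx_G^{j+1}$-class, so \emph{one} witness strategy serves every point of a class simultaneously, and by $D_G$-closure the classes reached under $F^0$ are exactly those where $\theta$ holds; since $\approx_G$ preserves length and restricts to $\approx_G^{j+1}$ on prefixes, the glued strategy is uniform. A final seam check verifies fulfilment: $D_G\varphi$ supplies $\varphi$ at stage $j$ (needed for $\Box$ and for the $t=j$ instance of $\mathcal{U}$), while the chosen continuation supplies $\theta$ from $j+1$ onward, and the two concatenate into $\Box\varphi$ / $\Diamond\varphi$ / $\varphi\mathcal{U}\psi$ from stage $j$.

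The main obstacle is precisely this gluing in the sufficient directions: combining the one-step commitment with per-class continuations into a \emph{single uniform} perfect-recall strategy. The enabling insight, which I would foreground, is that because each coalition modality is interpreted by quantifying over a full indistinguishability class — equivalently, by the $D_G$-closure of Proposition~\ref{p3} — witness strategies may be selected one per class without ever needing to separate $G$-indistinguishable histories, so uniformity survives the construction. By contrast, the necessary directions are routine once locality and compliance-bridging are in hand, and the $\Box$ and $\mathcal{U}$ analogues differ from the displayed $\bigcirc$-reasoning only in bookkeeping over the relevant range of stages.
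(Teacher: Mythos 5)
Your proposal is correct and follows essentially the same route as the paper's proof: the necessary directions reuse the single witnessing strategy via the argument of Proposition~\ref{rfp} (your ``compliance-bridging'' is exactly the paper's step showing that a path complying with $F_G$ from a $G$-equivalent stage-$(j{+}1)$ prefix already complies from stage $j$), and the sufficient directions use the same gluing of the one-step strategy with per-class continuation strategies, which the paper realizes via the set $X$ of length-$(j{+}1)$ outcome histories. Your explicit locality and $D_G$-closure lemmas, and the remark that continuations must be chosen one per $\approx_G^{j+1}$-class to keep the glued strategy uniform, only make precise what the paper uses implicitly.
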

\begin{proof}
For every iCGS $M$, every path $\lambda$ of $M$ and every stage $j\in \mathbb{N}$ on $\lambda$,

\ref{Box}. assume $M, \lambda, j\models D_G\varphi\wedge\left\langle\langle G \right \rangle\rangle\!\bigcirc\!\left\langle\langle G \right \rangle\rangle\Box\varphi$, then  $M, \lambda, j\models D_G\varphi$ and $M, \lambda, j\models \left\langle\langle G \right \rangle\rangle\!\bigcirc\!\left\langle\langle G \right \rangle\rangle\Box\varphi$. By the latter, we get that there is $F^1_G = \langle f^1_i\rangle_{i\in G}$ such that for all $\lambda_1\in \approx_G^j(\lambda)$, for all $\lambda_2\in\mathcal{P}(F_G^1, \lambda_1[0,j])$, $M, \lambda_2,j+1\models\left\langle\langle G \right \rangle\rangle\Box\varphi$. It follows that there is $F^{2\cdot x}_G = \langle f^{2\cdot x}_i\rangle_{i\in G}$ where $x = \lambda_2[0,j+1]$ such that for all $\lambda_3\in \approx_G^{j+1}(\lambda_2)$, for all $\lambda_4\in\mathcal{P}(F_G^{2\cdot x}, \lambda_3[0,j+1])$, for all $t\geq j+1$, $M, \lambda_4, t\models \varphi$. We next construct a new joint strategy $F_G = \langle f_i\rangle_{i\in G}$ based on $F^1_G$ and $F^{2\cdot x}_G$. In order to define $F_G$, we first need the following notation. Let
\vspace{-1mm}
\begin{center}
$X = \{\lambda'[0,j]\stackrel{\alpha}{\rightarrow}w~|~ \lambda'\in \approx_G^j(\lambda), \forall i\in G, \alpha(i) = f^1_i(\lambda'[0,j]) \hbox{ and } w = \delta(\lambda'[j], \alpha)\}$
\end{center}
\vspace{-1mm}
Intuitively, $X$ is the set of all possible outcomes generated by the agents in $G$ taking the next actions specified by $F^1_G$ from a history that is indistinguishable from history $\lambda[0,j]$. We now define the strategy $F_G = \langle f_i\rangle_{ i\in G}$ as follows: For all $h\in H(M)$ and for all $i\in G$,
\begin{displaymath}
  f_i(h) = \left\{\begin{array}{ll}
  f^{2\cdot l}_i(h) & \mbox{ if $\exists l\in X$ such that $l$ is a segment of $h$}\\
   f^1_i(h) & \mbox{ otherwise}
  \end{array}\right.
\end{displaymath}
Note that this strategy is well defined, because if a history $h$ has a segment in $X$, there is only one such segment due to the fact that all histories in $X$ has the same length according to the definition for equivalence relation.

We next show that $F_G$ is just the joint strategy we need to verify $\left\langle\langle G \right \rangle\rangle\Box\varphi$.
That is, for all $\lambda'\in\approx_G^j(\lambda)$, for all $\lambda''\in\mathcal{P}(F_G, \lambda'[0,j])$, we want to prove that for all $s\geq j$, $M, \lambda'', s\models\varphi$. As for any $l\in X$, $|l|>|\lambda'[0,j]|$, then there is no $l\in X$ such that $l$ is a segment of $\lambda'[0,j]$, then by the definition of $F_G$, $F_G(\lambda'[0,j]) = F^1_G(\lambda'[0,j])$, so $\lambda''[0,j+1]\in X$ and  $M, \lambda'', j+1\models \left\langle\langle G \right \rangle\rangle\Box\varphi$. From the later, we get that for all $\lambda^\bullet\in \approx_G^{j+1}(\lambda'')$, for all $\lambda^*\in\mathcal{P}(F_G^{2\cdot y}, \lambda^\bullet[0,j+1])$ where $y =\lambda''[0,j+1]$, for all $t\geq j+1$, $M, \lambda^*, t\models \varphi$. Since $\lambda''[0,j+1]\in X$, then $\lambda^\bullet[0,j+1]\in X$. And by the definition of $F_G$ and the assumption $\lambda''\in\mathcal{P}(F_G, \lambda'[0,j])$, we get $\lambda''\in \mathcal{P}(F_G^{2\cdot y}, \lambda^\bullet[0,j+1])$, so for all $s\geq j+1$, $M, \lambda'', s\models \varphi$. And by the assumption $M, \lambda, j\models D_G\varphi$, we get $M, \lambda'', j\models \varphi$. So for all $s\geq j$, $M, \lambda'', s\models\varphi$, so $M, \lambda, j\models \left\langle\langle G \right \rangle\rangle\Box\varphi$.

The other direction is proved by a similar method in Proposition \ref{rfp}.

\medskip

\ref{Dnecessary}. assume $M, \lambda, j\models \left\langle\langle G \right \rangle\rangle\Diamond\varphi$, then there is $F_G = \langle f_i\rangle_{i\in G}$ such that for all $\lambda'\in \approx_G^j(\lambda)$, for all $\lambda''\in \mathcal{P}(F_G, \lambda'[0,j])$, there is $k\geq j$ such that $M, \lambda'', k\models\varphi$. Further assume $M, \lambda, j\models D\neg\varphi$, then for all $\lambda^*\in \approx_G^j(\lambda)$, $M, \lambda^*, j\models\neg\varphi$. Then for all $\lambda'\in \approx_G^j(\lambda)$, for all $\lambda''\in \mathcal{P}(F_G, \lambda'[0,j])$, there is $k > j$ such that $M, \lambda'', k\models\varphi$.
We want prove that $M, \lambda,j\models\left\langle\langle G \right \rangle\rangle\!\bigcirc\!\left\langle\langle G \right \rangle\rangle\Diamond\varphi$. It is not hard to show that $F_G$ is just the joint strategy for the both coalition operators.

\medskip

\ref{Usufficient}. assume $M, \lambda, j\models D_G\psi\vee(D_G\varphi\wedge\left\langle\langle G \right \rangle\rangle\!\bigcirc\!\left\langle\langle G \right \rangle\rangle\varphi\mathcal{U}\psi)$, want $M, \lambda, j\models \left\langle\langle G \right \rangle\rangle\varphi\mathcal{U}\psi$. We next prove this by two cases: either $M, \lambda, j\models D_G\psi$ or $M, \lambda, j\models D_G\varphi\wedge\left\langle\langle G \right \rangle\rangle\!\bigcirc\!\left\langle\langle G \right \rangle\rangle\varphi\mathcal{U}\psi$.

If $M, \lambda, j\models D_G\psi$, then for all $\lambda'\in \approx_G^j(\lambda)$, $M, \lambda', j\models\psi$, then we have that for any $F_G$, for all $\lambda'\in \approx_G^j(\lambda)$, for all $\lambda''\in\mathcal{P}(F_G, \lambda'[0,j])$, $M, \lambda'',j\models\psi$ by $\lambda''[0,j] = \lambda'[0,j]$. Thus,  $M, \lambda, j\models \left\langle\langle G \right \rangle\rangle\varphi\mathcal{U}\psi$.

If $M, \lambda, j\models D_G\varphi\wedge\left\langle\langle G \right \rangle\rangle\!\bigcirc\!\left\langle\langle G \right \rangle\rangle\varphi\mathcal{U}\psi$, then  $M, \lambda, j\models D_G\varphi$ and $M, \lambda, j\models \left\langle\langle G \right \rangle\rangle\!\bigcirc\!\left\langle\langle G \right \rangle\rangle\varphi\mathcal{U}\psi$. By the latter, we get that there is $F^1_G = \langle f^1_i\rangle_{i\in G}$ such that for all $\lambda_1\in \approx_G^j(\lambda)$, for all $\lambda_2\in\mathcal{P}(F_G^1, \lambda_1[0,j])$, $M, \lambda_2,j+1\models\left\langle\langle G \right \rangle\rangle\varphi\mathcal{U}\psi$. It follows that there is $F^{2\cdot x}_G = \langle f^{2\cdot x}_i\rangle_{i\in G}$ where $x = \lambda_2[0,j+1]$ such that for all $\lambda_3\in \approx_G^{j+1}(\lambda_2)$, for all $\lambda_4\in\mathcal{P}(F_G^{2\cdot x}, \lambda_3[0,j+1])$, there is $k \geq j+1$ such that $M, \lambda_4, k\models \psi$ and for all $j+1\leq t <k$, $M, \lambda_4, t\models \varphi$. We next construct a new joint strategy $F_G = \langle f_i\rangle_{i\in G}$ based on $F^1_G$ and $F^{2\cdot x}_G$. In order to define $F_G$, we first need the following notation. Let
\vspace{-1mm}
\begin{center}
$X = \{\lambda'[0,j]\stackrel{\alpha}{\rightarrow}w~|~ \lambda'\in \approx_G^j(\lambda), \forall i\in G, \alpha(i) = f^1_i(\lambda'[0,j]) \hbox{ and } w = \delta(\lambda'[j], \alpha)\}$
\end{center}
\vspace{-1mm}
Intuitively, $X$ is the set of all possible outcomes generated by the agents in $G$ taking the next actions specified by $F^1_G$ from a history that is indistinguishable from history $\lambda[0,j]$. We can now define the strategy $F_G = \langle f_i\rangle_{ i\in G}$ as follows: For all $h\in H(M)$ and for all $i\in G$,
\begin{displaymath}
  f_i(h) = \left\{\begin{array}{ll}
  f^{2\cdot l}_i(h) & \mbox{ if $\exists l\in X$ such that $l$ is a segment of $h$}\\
   f^1_i(h) & \mbox{ otherwise}
  \end{array}\right.
\end{displaymath}
Note that this strategy is well defined, because if a history $h$ has a segment in $X$, there is only one such segment due to the fact that all histories in $X$ has the same length according to the definition for equivalence relation.

We next show that $F_G$ is just the joint strategy we need to verify $\left\langle\langle G \right \rangle\rangle\varphi\mathcal{U}\psi$.
That is, for all $\lambda'\in\approx_G^j(\lambda)$, for all $\lambda''\in\mathcal{P}(F_G, \lambda'[0,j])$, we want to prove that there is $r\geq j$, $M, \lambda'', r\models \psi$ and for all $j\leq s < r$, $M, \lambda'', s\models\varphi$. As for any $l\in X$, $|l|>|\lambda'[0,j]|$, then there is no $l\in X$ such that $l$ is a segment of $\lambda'[0,j]$, then by the definition of $F_G$, $F_G(\lambda'[0,j]) = F^1_G(\lambda'[0,j])$, so $\lambda''[0,j+1]\in X$ and  $M, \lambda'', j+1\models \left\langle\langle G \right \rangle\rangle\varphi\mathcal{U}\psi$. From the later, we get that for all $\lambda^\bullet\in \approx_G^{j+1}(\lambda'')$, for all $\lambda^*\in\mathcal{P}(F_G^{2\cdot y}, \lambda^\bullet[0,j+1])$ where $y =\lambda''[0,j+1]$, there is $k \geq j+1$ such that $M, \lambda^*, k\models \psi$ and for all $j+1\leq t <k$, $M, \lambda^*, t\models \varphi$. Since $\lambda''[0,j+1]\in X$, then $\lambda^\bullet[0,j+1]\in X$. And by the definition of $F_G$ and the assumption $\lambda''\in\mathcal{P}(F_G, \lambda'[0,j])$, we get $\lambda''\in \mathcal{P}(F_G^{2\cdot y}, \lambda^\bullet[0,j+1])$, so there is $r \geq j+1$ such that $M, \lambda'', r\models \psi$ and for all $j+1\leq s <r$, $M, \lambda'', s\models \varphi$. And by the assumption $M, \lambda, j\models D_G\varphi$, we get $M, \lambda'', j\models \varphi$. So there is $r\geq j$, $M, \lambda'', r\models \psi$ and for all $j\leq s < r$, $M, \lambda'', s\models\varphi$, so $M, \lambda, j\models \left\langle\langle G \right \rangle\rangle\varphi\mathcal{U}\psi$.

Thus, in both cases $M, \lambda, j\models \left\langle\langle G \right \rangle\rangle\varphi\mathcal{U}\psi$.

\medskip

The clause \ref{Dsufficient} is proved by a similar method of clause \ref{Usufficient}, while the clause \ref{Unecessary} is proved by a similar method of clause \ref{Dnecessary}.
\end{proof}
The first statement says that a coalition by sharing their knowledge can cooperate to maintain $\varphi$ iff the coalition distributedly knows $\varphi$ at the current stage and there is a joint strategy for this coalition to possess this ability at the next stage. The second statement states that a coalition by sharing their knowledge can eventually achieve $\varphi$ only if either the coalition considers it is possible that $\varphi$ at the current stage or it has a joint strategy to possess this ability at the next stage, while the third statement provides a sufficient condition that  a coalition by sharing their knowledge can eventually achieve $\varphi$ if either it is distributed knowledge among the coalition that $\varphi$ or the coalition can cooperate to achieve this ability at the next stage.
The intuitions behind the last two statements are similar to above two.
In particular, we have the following result for a single agent.
\begin{corollary}
 For any $i\in N$ and any $\varphi,\psi\in\mathcal{L}$,
\begin{enumerate}
\item $\models \left\langle\langle i \right \rangle\rangle\Box\varphi\ba K_i\varphi\wedge\left\langle\langle i \right \rangle\rangle\!\bigcirc\!\left\langle\langle i \right \rangle\rangle\Box\varphi$ \label{Box}
  \item $\models \left\langle\langle i \right \rangle\rangle\Diamond\varphi\rightarrow\widehat{K}_i \varphi\vee\left\langle\langle i \right \rangle\rangle\!\bigcirc\!\left\langle\langle i \right \rangle\rangle\Diamond\varphi$ \label{Dnecessary}
  \item $\models K_i\varphi\vee\left\langle\langle i\right \rangle\rangle\!\bigcirc\!\left\langle\langle i \right \rangle\rangle\Diamond\varphi\rightarrow\left\langle\langle i \right \rangle\rangle\Diamond\varphi $\label{Dsufficient}
  \item $\models \left\langle\langle i \right \rangle\rangle\varphi\mathcal{U}\psi\rightarrow\widehat{K}_i\psi\vee(K_i\varphi\wedge\left\langle\langle i \right \rangle\rangle\!\bigcirc\!\left\langle\langle i \right \rangle\rangle\varphi\mathcal{U}\psi)$\label{Unecessary}
  \item $\models K_i\psi\vee(K_i\varphi\wedge\left\langle\langle i \right \rangle\rangle\!\bigcirc\!\left\langle\langle i \right \rangle\rangle\varphi\mathcal{U}\psi)\rightarrow \left\langle\langle i \right \rangle\rangle\varphi\mathcal{U}\psi$ \label{Usufficient}
\end{enumerate}
\end{corollary}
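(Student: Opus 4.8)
The plan is to derive the corollary as the single-agent instance of the Theorem, with no fresh induction or strategy construction required. The observation driving everything is that a singleton is a degenerate coalition: for $G = \{i\}$ the group indistinguishability relation collapses to the individual one, since $\approx_{\{i\}}^{j} = \bigcap_{k\in\{i\}}\approx_k^j = \approx_i^j$. Consequently the knowledge-sharing apparatus that distinguishes the new semantics becomes vacuous when the ``coalition'' has a single member, and the distributed-knowledge operator degenerates to individual knowledge.

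First I would record the epistemic correspondence. By definition $D_{\{i\}}\varphi = \langle\langle\{i\}\rangle\rangle\varphi\mathcal{U}\varphi$ and $K_i\varphi = \langle\langle i\rangle\rangle\varphi\mathcal{U}\varphi$ are literally the same formula, and by Proposition~1 together with $\approx_{\{i\}}^{j} = \approx_i^j$ they are satisfied at exactly the same triples $(M,\lambda,j)$; hence $\models D_{\{i\}}\varphi \leftrightarrow K_i\varphi$ for every $\varphi$, and by applying this to $\neg\varphi$ and negating, $\models \widehat{D}_{\{i\}}\varphi \leftrightarrow \widehat{K}_i\varphi$ as well. Note also that the coalition modalities themselves need no translation: $\langle\langle i\rangle\rangle$ is by convention nothing but $\langle\langle\{i\}\rangle\rangle$, so the $\bigcirc$, $\Box$, and $\mathcal{U}$ parts of each clause are already in the desired form.

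Next I would instantiate each of the five clauses of the Theorem at $G = \{i\}$ and rewrite the resulting occurrences of $D_{\{i\}}$ as $K_i$ and of $\widehat{D}_{\{i\}}$ as $\widehat{K}_i$. Since satisfaction for $\mathcal{L}$ is defined by structural recursion, replacing a subformula by one that is true at exactly the same points leaves the truth value of the whole formula unchanged at every point; thus each Theorem clause at $G=\{i\}$ is valid iff the corresponding corollary clause is, and the five validities transfer verbatim.

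The main thing to check --- and the only place where any argument is needed beyond reading off definitions --- is precisely this replacement-of-equivalents step inside the nested modal contexts, for instance the $\widehat{D}_G\psi$ sitting under a disjunction alongside a doubly-nested coalition modality in clause~\ref{Unecessary}. This is routine given the compositional truth definition, but it is worth stating explicitly, since it is exactly what licenses substituting the provably equivalent $K_i$ and $\widehat{K}_i$ for $D_{\{i\}}$ and $\widehat{D}_{\{i\}}$ without re-running any of the strategy constructions from the proof of the Theorem. In short, there is no genuine obstacle here: the corollary is a notational specialization, and the entire content lies in the two equivalences $\models D_{\{i\}}\varphi \leftrightarrow K_i\varphi$ and $\models \widehat{D}_{\{i\}}\varphi \leftrightarrow \widehat{K}_i\varphi$.
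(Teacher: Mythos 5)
Your proposal is correct and matches the paper's intent exactly: the paper gives no separate proof, presenting the corollary as the immediate instantiation of the Theorem at $G=\{i\}$, where $\approx_{\{i\}}^{j}=\approx_i^j$ and $D_{\{i\}}\varphi$ is by definition the very same formula as $K_i\varphi$. Your extra care about substituting equivalents inside nested modal contexts is sound but not even needed, since the identification is syntactic rather than merely semantic.
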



\section{Related Work}
In recent years, there are many logical formalisms for reasoning about coalition abilities and strategic interactions in MAS. \cite{Ditmarsch2015,herzig2014logics} provide a latest survey of this topic. In this following, we will review several works which are most related to ATL with imperfect information and perfect recall.


In the context of imperfect information, several semantic variants have been proposed for ATL based on different interpretations of agents' ability~\cite{aagotnes2007alternating,jamroga2004agents,schobbens2004alternating,van2003cooperation}. In particular, \cite{bulling2014comparing,jamroga2011comparing} provide formal comparisons of validity sets for semantic variants of ATL. Similar to Bulling {\it et al.}'s no forgetting semantics~\cite{bulling2014agents}, our semantics is also history-based w.r.t a path and an index on the path,
but there are fundamental differences. First of all, we consider a finer notion of perfect recall by taking both past states and actions into considerations to deal with situations where different actions may have the same effects. Secondly, our notion of group uniform strategies is defined in terms of distributed knowledge instead of general knowledge as we assume that when a set of agents form a coalition, they are able to share their knowledge before cooperating to ensure a goal.

Several epistemic-ATL style logics have been proposed to investigate the interaction of group knowledge and coalition ability~\cite{bulling2014comparing,vanDitmarsch14,schobbens2004alternating,guelev2011alternating,huang2016strengthening}. 
In particular, the most relevant works are~\cite{vanDitmarsch14,guelev2011alternating}. Specifically,
\cite{guelev2011alternating} presents a variant of ATL with knowledge, perfect recall and past. Different from our motivation,  they use the distributed knowledge of coalitions so as to have a decidable model-checking problem.
\cite{vanDitmarsch14} proposes three types of coalition operators to specify different cases of how all agents in the coalition cooperate to enforce a goal. Among them, the communication strategy operator $\left\langle\langle G\right \rangle\rangle_c$ captures the intuition behind our coalition operator.  
Specifically, we have the following correspondence.
\vspace{-1mm}
\begin{proposition}
Given an iCGS $M$, a path $\lambda$ of $M$ and a stage $j\in \mathbb{N}$ on $\lambda$, let $\varphi$ be any formula of the form $\!\bigcirc\!\psi$, $\Box\psi$ or $\psi_1\mathcal{U}\psi_2$,
$M, \lambda, j\models \left\langle\langle G\right \rangle\rangle\varphi \mbox{ iff } M, \lambda[0,j]\models_{euATL} \left\langle\langle G\right \rangle\rangle_c\varphi$
\end{proposition}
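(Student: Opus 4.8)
The plan is to reduce the claim to a matching of definitions, wrapped inside a structural induction that handles the nesting of operators. The key observation is that the new semantics, although stated over a triple $(M,\lambda,j)$, is in fact \emph{history-determined}: the truth of $\langle\langle G\rangle\rangle\varphi$ at $(M,\lambda,j)$ depends on $\lambda$ only through the initial segment $\lambda[0,j]$, since both the quantification domain $\approx_G^j(\lambda)$ and each outcome set $\mathcal{P}(\lambda'[0,j],F_G)$ are determined by $\lambda[0,j]$ alone. This is exactly what makes evaluation at $(M,\lambda,j)$ comparable with euATL's evaluation at the history $\lambda[0,j]$. I would first make this observation precise, and then recall the euATL clause for the communication-strategy operator $\langle\langle G\rangle\rangle_c$: it is evaluated at a history, it ranges over communication strategies for $G$ (legal functions of histories that are uniform with respect to the group's distributed-knowledge history equivalence), and it quantifies universally over all histories indistinguishable from the current one under that same equivalence together with all their strategy-compatible continuations. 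Placed side by side with the definition of the new semantics, the two clauses have the same shape.

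The induction would run over the structure of state formulas, with the translation that rewrites every $\langle\langle G\rangle\rangle$ as $\langle\langle G\rangle\rangle_c$; the stated proposition is precisely the coalition-operator step. The base and Boolean cases are immediate because atoms are evaluated on $\lambda[j]$, which is the last state of $\lambda[0,j]$. For the coalition step I would establish three matching lemmas. First, a function $f_i\colon H\to\mathcal{A}$ is a uniform perfect recall strategy for $i$ w.r.t.\ $G$ in the sense of the new framework iff it is a legal communication strategy for $i$ in $G$ in euATL: both demand $f_i(h)\in d_i(last(h))$ and invariance on $\approx_G$-equivalent histories, and $\approx_G=\bigcap_{i\in G}\approx_i$ is exactly the distributed-knowledge equivalence used by $\langle\langle G\rangle\rangle_c$. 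Second, $\{\lambda'[0,j]\mid \lambda'\in\approx_G^j(\lambda)\}$ is exactly the set of histories $\approx_G$-equivalent to $\lambda[0,j]$, and $\mathcal{P}(\lambda'[0,j],F_G)$ coincides with euATL's outcome cone of $F_G$ from $\lambda'[0,j]$. Third, with these identifications the three clauses for $\bigcirc$, $\Box$ and $\mathcal{U}$ unfold term by term, the inner state formulas $\psi$ (resp.\ $\psi_1,\psi_2$) being discharged by the induction hypothesis applied to the pairs $(M,\lambda'',k)$ and $(M,\lambda''[0,k])$.

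I expect the main obstacle to be reconciling the two notions of history equivalence. The new framework deliberately uses a \emph{finer} perfect recall in which an agent also remembers her own past actions, whereas the standard epistemic-ATL presentation of euATL typically relies on a purely state-based history equivalence. The delicate point is therefore to verify that, on the histories that actually arise as $\lambda'[0,j]$ with $\lambda'\in\approx_G^j(\lambda)$ and on their continuations, the action-aware group relation $\approx_G$ and euATL's group indistinguishability induce the same uniformity constraints, hence the same admissible joint strategies and the same outcome sets; once this is settled, the truth values cannot differ. I would argue this by noting that a strategy already takes the full history---including the agent's own actions---as input, so fixing a common action on $\approx_G$-indistinguishable histories imposes no constraint beyond what euATL imposes on the reachable cone, and conversely; values on unreachable histories are irrelevant to the quantifications above. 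The remaining work---the $\Box$ and $\mathcal{U}$ cases together with the routine Boolean base cases---follows the $\bigcirc$ pattern and presents no further difficulty.
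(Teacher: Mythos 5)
The paper states this proposition in the Related Work section without giving any proof, so there is nothing to compare your argument against; your plan --- establish that the new semantics is history-determined, then match the satisfaction clause of $\left\langle\langle G\right\rangle\rangle$ term by term against the euATL clause for $\left\langle\langle G\right\rangle\rangle_c$, with an induction and a translation handling nested coalition operators --- is the natural (and essentially the only) way to discharge it. The history-determinedness observation is correct, since $\approx_G^j(\lambda)$ and $\mathcal{P}(\lambda'[0,j],F_G)$ depend on $\lambda$ only through $\lambda[0,j]$, and the identification of uniform strategies with communication strategies and of $\{\lambda'[0,j] \mid \lambda'\in\;\approx_G^j(\lambda)\}$ with the euATL equivalence class of $\lambda[0,j]$ is exactly the content one has to check.

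The gap is precisely at the point you flag as the ``main obstacle'' and then dismiss. If euATL's history indistinguishability is state-based while the paper's $\approx_i$ is action-aware, the two semantics genuinely diverge, and your reconciliation argument does not close the gap: the finer relation simultaneously \emph{enlarges} the set of admissible uniform strategies (fewer histories are forced to receive the same action) and \emph{shrinks} the universal quantification domain $\approx_G^j(\lambda)$, and both effects weaken the requirement, so the implication from the new semantics to euATL can fail. Concretely, take two histories $q_0\stackrel{a}{\rightarrow}q_1$ and $q_0\stackrel{b}{\rightarrow}q_1$ with $a(i)\neq b(i)$: they are state-indistinguishable but $\approx_i$-distinguishable, so the new semantics at the first history neither constrains the strategy on the second nor requires success from it, whereas a state-based euATL would require both; a model in which success from the two histories demands different subsequent actions then refutes the left-to-right direction. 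Your claim that uniformity ``imposes no constraint beyond what euATL imposes on the reachable cone, and conversely'' only works in the direction where the witnessing strategy is uniform for the \emph{coarser} relation (then reachable continuations automatically agree on $G$'s actions, hence are also $\approx_G$-equivalent); it does not work starting from a strategy that is merely uniform for the finer relation. The proposition holds only because van Ditmarsch and Knight's history equivalence is itself action-aware (the paper relies on this when it describes its own recall as being ``more like GDL perfect recall'' in the sense of that reference), and your proof must verify this literal coincidence of the two relations rather than argue that the difference is harmless --- in general it is not.
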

\vspace{-1mm}
However, their work is different from ours in the following aspects: firstly, they propose two epistemic versions of ATL, namely uATL and euATL, to address the issue of uniformity of strategies in the combination of strategic and epistemic systems, while we introduce a new semantics without adding new operators to the language to explore the interplay of epistemic and coalitional operators; secondly, their results mainly focus on the relations and logical properties of three coalition ability operators, while we investigate fixed-pointed characterizations for the interplay of distributed knowledge and coalition operators which is not involved in~\cite{vanDitmarsch14};
thirdly, their meaning of coalition is more subtle than ours. Except the communication strategy operator,
the comparison with the other two strategy operators is less straightforward since they are based on assumptions of coalitions without sharing knowledge. We hope to understand them better in the future.

Finally, it is also worth mentioning that \cite{herzig2006knowing} adopts a similar meaning of coalition so as to capture the notion of ``knowing how to play''. Besides the different motivations, that work is based on STIT framework and just considers one-step uniform strategies without investigating the interplay of epistemic and coalitional operators.

\section{Conclusion}
In this paper, we have proposed new semantics for ATL with imperfect information and perfect recall to explore the interplay of the knowledge shared by a group of agents and its coalition abilities. Compared to existing alternative semantics, we have showed that our semantics can not only preserve the desirable properties of coalition ability in traditional coalitional logics, but also provide a finer notion of perfect recall requiring an agent remembers the past states as well as the past actions. More importantly, we have investigated the interplay of epistemic and coalitional operators.

In the future we intend to study the computational complexity of ATL with this new semantics, such as the model-checking problem. 
In this paper, we have investigated how knowledge sharing within a group of agents contributes to its coalitional ability. 
 This work can be seen as an attempt towards the question: which kind of group knowledge is required for a group to achieve some goal in the context of imperfect information. We believe that it is an interesting question for further investigation by considering other cases such as group without knowledge sharing or with partial knowledge sharing~\cite{huang2016strengthening}.

\section*{Acknowledgments}
We are grateful to Heng Zhang for his valuable help, and special thanks are due to three anonymous referees for their insightful comments. This research was partially supported by A key project of National Science of China titled with A study on dynamic logics for games (15AZX020).

\bibliographystyle{splncs03}

\end{document}